\pgfplotsset{compat=1.17}
\newcommand{\card}[1]{\# #1 }
\newcommand{\diff}{\mathop{}\!\mathrm{d}}
\newcommand{\ind}[1]{\mathbf{1}_{#1}}
\newcommand{\prob}[1]{\Delta_{#1}}
\DeclareMathOperator{\E}{\mathbb{E}}
\DeclareMathOperator{\Pbb}{\mathbb{P}}
\DeclareMathOperator*{\argmax}{arg\,max}
\DeclareMathOperator*{\argmin}{arg\,min}
\DeclareMathOperator{\Span}{Span}
\DeclareMathOperator{\hull}{Conv}
\DeclareMathOperator{\sign}{sign}
\DeclareMathOperator*{\supp}{supp}
\DeclareMathOperator{\trace}{Tr}
\newcommand{\N}{\mathbb{N}}
\newcommand{\R}{\mathbb{R}}
\newcommand{\bbracket}[1]{\left\llbracket #1 \right\rrbracket}
\renewcommand{\brace}[1]{\left\{ #1 \right\}}
\newcommand{\bracket}[1]{\left[ #1 \right]}
\newcommand{\paren}[1]{\left( #1 \right)}
\newcommand{\midvert}{\,\middle\vert\,}
\newcommand{\abs}[1]{\left| #1 \right|}
\newcommand{\norm}[1]{\left\| #1 \right\|}
\newcommand{\scap}[2]{\left\langle #1, #2 \right\rangle}
\newcommand{\Sfrak}{\mathfrak{S}}
\newcommand{\X}{\mathcal{X}}
\newcommand{\Y}{\mathcal{Y}}
\renewcommand{\epsilon}{\varepsilon}
\renewcommand{\phi}{\varphi}
\theoremstyle{plain}
\newtheorem{theorem}{Theorem}
\newtheorem{lemma}[theorem]{Lemma}
\newtheorem{proposition}[theorem]{Proposition}
\newtheorem{definition}[theorem]{Definition}
\newtheorem{assumption}{Assumption}
\newtheorem{remark}[theorem]{Remark}
\icmltitlerunning{Disambiguation of weak supervision with exponential convergence rates}
\begin{document}

\twocolumn[
\icmltitle{Disambiguation of Weak Supervision leading to Exponential Convergence rates}

\begin{icmlauthorlist}
\icmlauthor{Vivien Cabannes}{to}
\icmlauthor{Francis Bach}{to}
\icmlauthor{Alessandro Rudi}{to}
\end{icmlauthorlist}

\icmlaffiliation{to}{Institut National de Recherche en Informatique et en
  Automatique -- D\'epartement d'Informatique de l'\'Ecole Normale Sup\'erieure
  -- PSL Research University} 

\icmlcorrespondingauthor{Vivien Cabannes}{vivien.cabannes@gmail.com}

\icmlkeywords{Machine learning, ICML, weak supervision, partial labelling, superset learning, semi-supervised learning, ranking with partial ordering, disambiguation, data annotation, missing data, structured prediction, exponential convergences rates, generalization bonds, cluster assumptions, low-density separation, non-ambiguity, nearest neighbors, state-of-the-art}

\vskip 0.3in
]

\printAffiliationsAndNotice{}  

\begin{abstract}
  Machine learning approached through supervised learning requires expensive
  annotation of data. This motivates weakly supervised learning, where data are
  annotated with incomplete yet discriminative information. In this paper, we
  focus on partial labelling, an instance of weak supervision where, from a given
  input, we are given a set of potential targets. We review a disambiguation
  principle to recover full supervision from weak supervision, and propose an
  empirical disambiguation algorithm. We prove exponential convergence rates of
  our algorithm under classical learnability assumptions, and we illustrate the
  usefulness of our method on practical examples.
\end{abstract}

\section{Introduction}

In many applications of machine learning, such as recommender systems, where an
input $x$ characterizing a user should be matched with a target $y$ representing
an ordering of a large number $m$ of items, accessing fully supervised data
$(x, y)$ is not an option.
Instead, one should expect weak information on the target $y$, which could be a
list of previously taken (if items are online courses), watched
(if items are plays), {\em etc.}, items by a user characterized by the feature vector $x$.
This motivates {\em weakly supervised learning}, aiming at learning a mapping
from inputs to targets in such a setting where tools from supervised learning
can not be applied off-the-shelves.

Recent applications of weakly supervised learning showcase impressive results in
solving complex tasks such as action retrieval on instructional videos
\citep{Miech2019}, image semantic segmentation \citep{Papandreou2015}, salient
object detection \citep{Wang2017}, 3D pose estimation \citep{Dabral2018},
text-to-speech synthesis \citep{Jia2018}, to name a few.
However, those applications of weakly supervised learning are usually based on clever
heuristics, and theoretical foundations of learning from weakly supervised data
are scarce, especially when compared to statistical learning literature on
supervised learning \citep{Vapnik1995,Boucheron2005,Steinwart2008}. We aim to provide a step
in this direction. 

In this paper, we focus on partial labelling, a popular instance of weak
supervision, approached with a structured prediction point of view \cite{Ciliberto2020}. We detail
this setup in Section \ref{sec:setup}.
Our contributions are organized as follows.
\begin{itemize}
  \item In Section \ref{sec:algo}, we introduce a disambiguation algorithm to
    retrieve fully supervised samples from weakly supervised ones, before applying
    off-the-shelf supervised learning algorithms to the completed dataset.
  \item In Section \ref{sec:stat}, we prove exponential convergence rates of our
    algorithm, in term of the fully supervised excess of risk, given classical
    learnability assumptions.
  \item In Section \ref{sec:optim}, we explain why disambiguation algorithms are
    intrinsically non-convex, and provide guidelines based on well-grounded
    heuristics to implement our algorithm.
\end{itemize}
We end this paper with a review of literature in Section \ref{sec:litterature},
before showcasing the usefulness of our method on practical examples in Section
\ref{sec:example}, and opening on perspectives in Section \ref{sec:opening}.

\section{Disambiguation of Partial Labelling}
\label{sec:setup}

In this section, we review the supervised learning setup, introduce the partial
labelling problem along with a principle to tackle this instance of weak supervision.

Algorithms can be formalized as mapping an input $x$ to a desired output $y$,
respectively belonging to an input space~$\X$ and an output space $\Y$.
Machine learning consists in automating the design of the mapping $f:\X\to\Y$,
based on a joint distribution $\mu \in \prob{\X\times\Y}$ over input/output
pairings $(x, y)$ and a loss function $\ell:\Y\times\Y\to\R$,
measuring the error cost of outputting $f(x)$ when one should have output $y$. 
The optimal mapping is defined as satisfying
\begin{equation}
  \label{eq:sol_fs}
  f^* \in \argmin_{f:\X\to \Y} \E_{(X, Y) \sim \mu} \bracket{\ell(f(X), Y)}.
\end{equation}
In \emph{supervised learning}, it is assumed that one does not have access to the full
distribution $\mu$, but only to independent samples
$(X_i, Y_i)_{i\leq n} \sim \mu^{\otimes n}$.
In practice, accessing such samples means building a dataset of examples. While
input data $(x_i)$ are usually easily accessible, getting output pairings
$(y_i)$ generally requires careful annotation, which is both time-consuming and expensive.
For example, in image classification, $(x_i)$ can be collected by scrapping
images over the Internet. Subsequently a ``data labeller'' might be ask to
recognize a rare feline $y_i$ on an image $x_i$. While getting $y_i$ will be
hard in this setting, recognizing that it is a feline and describing elements of
color and shape is easy, and already helps to determine what outputs $f(x_i)$
are acceptable. A second example is given when pooling a known population
$(x_i)$ to get estimation of their political orientation $(y_i)$, one might get
information from recent election of percentage of voters across the political
landscape, leading to global constraints that $(y_i)$ should verify.
A supervision that gives information on $(y_i)_{i\leq n}$ without giving its
precise value is called \emph{weak supervision}.

\emph{Partial labelling}, also known as ``superset learning'', is an instance of
weak supervision, in which, for an input $x$, we do not access the precise label
$y$ but only a set $s$ of potential labels, $y\in s\subset \Y$. For example, on
a caracal image $x$, one might not get the label ``caracal'' $y$, but the set
$s$ ``feline'', containing all the labels $y$ corresponding to felines.
It is modelled through a distribution $\nu \in \prob{\X\times 2^\Y}$ over
$\X\times 2^\Y$ generating samples $(X, S)$, which should be compatible with the
fully supervised distribution $\mu \in \prob{\X\times\Y}$ as formalized by the
following definition.

\begin{definition}[Compatibility, \citet{Cabannes2020}]
  \label{def:compatibility}
  A fully supervised distribution $\mu \in \prob{\X\times\Y}$ is
  \emph{compatible} with a weakly supervised distribution
  $\nu \in \prob{\X\times\Y}$, denoted by $\mu\vdash\nu$ if there exists an
  underlying distribution 
  $\pi \in \prob{\X\times\Y\times 2^\Y}$, such that $\mu$, and $\nu$, are the
  respective marginal distributions of $\pi$ over $\X\times \Y$ and
  $\X\times 2^\Y$, and such that $y\in s$ for any tuple $(x, y, s)$
  in the support of $\pi$ (or equivalently $\pi\vert_{s} \in \prob{s}$, with
  $\pi\vert_{s}$ denoting the conditional distribution of $\pi$ given $s$).
\end{definition}

This definition means that a weakly supervised sample $(X, S) \sim \nu$ can be
thought as proceeding from a fully supervised sample $(X, Y) \sim \mu$ after
loosing information on $Y$ according to the sampling of $S\sim \pi\vert_{X, Y}$.
The goal of partial labelling is still to learn $f^*$ from
Eq.~\eqref{eq:sol_fs}, yet without accessing a fully supervised distribution
$\mu \in \prob{\X\times\Y}$ but only the weakly supervised distribution
$\nu \in \prob{\X\times 2^\Y}$.
As such, this is an ill-posed problem, since $\nu$ does not discriminate between
all $\mu$ compatible with it.  Following {\em lex parsimoniae},
\citet{Cabannes2020} have suggested to look for $\mu$ such that the labels are the
most deterministic function of the inputs, which they measure with a loss-based
``variance'', leading to the disambiguation
\begin{equation}
  \label{eq:principle}
  \mu^* \in \argmin_{\mu\vdash\nu} \inf_{f:\X\to\Y} \E_{(X, Y)\sim\mu}\bracket{\ell(f(X), Y)},
\end{equation}
and to the definition of the optimal mapping $f^*:\X\to\Y$
\begin{equation}
  \label{eq:solution}
  f^* \in \argmin_{f:\X\to\Y} \E_{(X, Y)\sim\mu^*}\bracket{\ell(f(X), Y)}.
\end{equation}
This principle is motivated by Theorem 1 of \citet{Cabannes2020} showing that
$f^*$ in Eq.~\eqref{eq:solution} is characterized by
$f^* \in \argmin_{f:\X\to\Y} \E_{(X, S)\sim\nu}\bracket{\inf_{y\in S} \ell(f(X), y)}$,
matching a prior formulation based on infimum loss
\citep{Cour2011,Luo2010,Hullermeier2014}.
In practice, it means that if $(S\vert X=x)$ has probability 50\% to be the set
``feline'' and 50\% the set ``orange with black stripes'', $(Y\vert X=x)$ should
be considered as 100\% ``tiger'', rather than 20\% ``cat'', 30\% ``lion'' and
50\% ``orange car with black stripes'', which could also explain $(S\vert X=x)$.
In other terms, Eq. \eqref{eq:principle} creates consensus between the different information provided on a label.
Similarly to supervised learning, partial labelling consists in retrieving $f^*$
without accessing $\nu$ but only samples $(X_i, S_i)_{i\leq n} \sim \nu^{\otimes n}$.

\begin{remark}[Measure of determinism]
  \label{rmk:determinism}
  Eq.~\eqref{eq:principle} is not the only variational way to push towards
  distribution where labels are deterministic function of the inputs. For example, one could
  minimize entropy \citep[{\em e.g.,}][]{Berthelot2019,Lienen2021}. However, a
  loss-based principle is 
  appreciable since the loss usually encodes structures of the output space \citep{Ciliberto2020},
  which will allow sample and computational complexity of consequent algorithms
  to scale with an intrinsic dimension of the space rather than the real one,
  e.g., $m$ rather than $m!$ when $\Y = \Sfrak_m$ and $\ell$ is a suitable
  ranking loss \citep[see Section~\ref{sec:ranking} or ][]{Nowak2019}.
\end{remark}

\section{Learning Algorithm}
\label{sec:algo}

In this section, given weakly supervised samples, we present a disambiguation
algorithm to retrieve fully supervised samples based on an empirical expression
of Eq.~\eqref{eq:principle}, before learning a mapping from $\X$ to $\Y$ based
on those fully supervised samples, according to Eq.~\eqref{eq:solution}.

Given a partially labelled dataset ${\cal D}_n = (x_i, s_i)_{i\leq n}$, sampled
accordingly to $\nu^{\otimes n}$, we retrieve fully supervised samples, based on
the following empirical version of Eq.~\eqref{eq:principle}, with
$C_n = \prod_{i\leq n} s_i \subset \Y^n$
\begin{equation}
  \label{eq:disambiguation}
  (\hat{y}_i)_{i\leq n} \in \argmin_{(y_i)_{i\leq n} \in C_n} \inf_{(z_i)_{i\leq n} \in \Y^n} \sum_{i, j=1}^n
  \alpha_j(x_i) \ell(z_i, y_j),
\end{equation}
where $(\alpha_i(x))_{i\leq n}$ is a set of weights measuring how much one
should base its prediction for $x$ on the observations made at $x_i$.
This formulation is motivated by the Bayes approximate rule proposed by
\citet{Stone1977}, which can be seen as the approximation of $\mu$ by
$n^{-1}\sum_{i,j=1}^n \alpha_j(x_i)\delta_{x_i} \otimes \delta_{y_j}$ in
Eq.~\eqref{eq:principle}.
In essence, $z_i$ (which corresponds to $f(x_i)$) is likely to be $y_i$, although Eq. \eqref{eq:disambiguation} allows for flexibility to avoid rigid interpolation.
As such Eq. \eqref{eq:disambiguation} should be understood as 
constraining $y_i\in s_i$ to be similar to $y_j\in s_j$ if $x_i$ and $x_j$ are close,
with the measure of similarity defined by $\alpha_i(x_j)$.

Once fully supervised samples $(x_i, \hat{y_i})$ have been recollected, one can
learn $f_n:\X\to\Y$, approximating $f^*$, with classical supervised learning
techniques. In this work, we will consider the structured prediction estimator
introduced by 
\citet{Ciliberto2016}, defined as
\begin{equation}
  \label{eq:estimate}
  f_n(x) \in \argmin_{z\in \Y} \sum_{i=1}^n \alpha_i(x) \ell(z, \hat{y}_i).
\end{equation}

\paragraph{Weighting scheme $\alpha$.}
For the weighting scheme $\alpha$, several choices are appealing. Laplacian
diffusion is one of them as it incorporates a prior on low density separation to
boost learning \citep{Zhu2003,Zhou2003,Bengio2006,Hein2007}.
Kernel ridge regression is another due to its theoretical guarantees
\citep{Ciliberto2020}.
In the theoretical analysis, we will use nearest neighbors.
Assuming $\X$ is
endowed with a distance $d$, and assuming, for readability sake, that ties to
define nearest neighbors do not happen, it is defined as
\[
  \alpha_i(x) = \left\{
    \begin{array}{cc}
      k^{-1} & \text{if}\quad \sum_{j=1}^n \ind{d(x, x_j) \leq d(x, x_i)} \leq k\\
      0 & \text{otherwise},
    \end{array}\right.
\]
where $k$ is a parameter fixing the number of neighbors.
Our analysis, leading to Theorem \ref{thm:convergence}, also holds for other
local averaging methods such as partitioning or Nadaraya-Watson estimators. 

\section{Consistency Result}
\label{sec:stat}

In this section, we assume $\Y$ finite, and prove the convergence of $f_n$
towards $f^*$ as $n$, the number of samples, grows to infinity.
To derive such a consistency result, we introduce a  surrogate problem that we
relate to the risk through a calibration inequality. We later assume that
weights are given by nearest neighbors and review classical assumptions, that we
work to derive exponential convergence rates.

In the following, we are interested in bounding the expected generalization
error, defined as
\begin{equation}
  \label{eq:excess}
  {\cal E}(f_n) = \E_{{\cal D}_n} {\cal R}(f_n) - {\cal R}(f^*),
\end{equation}
where
\(
  {\cal R}(f) = \E_{(X, Y)\sim\mu^*}\bracket{\ell(f(X), Y)},
\)
by a quantity that goes to zero, when $n$ goes to infinity.
This implies convergence in probability (the randomness being inherited from 
${\cal D}_n$) of ${\cal R}(f_n)$ towards $\inf_{f:\X\to\Y} {\cal R}(f)$, 
which is referred as \emph{consistency} of the learning algorithm.
We first introduce a few objects.

\paragraph{Disambiguation ground truth \texorpdfstring{$(y_i^*)$}{}.}
Introduce $\pi^* \in \prob{\X\times\Y\times 2^\Y}$ expressing the
compatibility of $\mu^*$ and $\nu$ as in Definition \ref{def:compatibility}.
Given samples $(x_i, s_i)_{i\leq n}$ forming a dataset ${\cal D}_n$, we enrich
this dataset by sampling $y_i^* \sim \pi^*\vert_{x_i, s_i}$, which build an
underlying dataset $(x_i, y_i^*, s_i)$ sampled accordingly $(\pi^*)^{\otimes n}$.
Given ${\cal D}_n$, while {\em a priori}, $y_i^*$ are random
variables, sampled accordingly to $\pi^*\vert_{x_i, s_i}$, because of the
definition of $\mu^*$ \eqref{eq:principle}, under basic
definition assumptions, they are actually deterministic,
defined as $y_i^* = \argmin_{y\in s_i} \ell(f^*(x_i), y)$.
As such, they should be seen as ground truth for $\hat{y}_i$.

\paragraph{Surrogate estimates.}
The approximate Bayes rule was successfully analyzed recently through the prism of
plug-in estimators by \citet{Ciliberto2020}. While we will not cast our
algorithm as a plug-in estimator, we will leverage this surrogate approach,
introducing two mappings $\phi$ and $\psi$ from~$\Y$ to an Hilbert space
${\cal H}$ such that
\begin{equation}
  \label{eq:loss}
  \forall\, z, y \in \Y, \qquad \ell(z, y) = \scap{\psi(z)}{\phi(y)},
\end{equation}
Such mappings always exist when $\Y$ is finite, and have been used to encode
``problem structure'' defined by the loss $\ell$ \citep{Nowak2019}.
Note that many losses ({\em e.g.} Hamming, Spearman, Kendall in ranking) can be written as correlation losses which corresponds to $\psi = -\phi$, yet Eq. \eqref{eq:loss} allows to model much more losses, especially asymmetric losses ({\em e.g.} discounted cumulative gain).
We introduce three surrogate quantities that will play a major role in the
following analysis, they map $\X$ to ${\cal H}$ as
\begin{gather}
  \label{eq:surrogate}
  g^*(x) = \E_{\mu^*}\bracket{\phi(Y)\midvert X=x},\qquad
  g_n(x) = \sum_{i=1}^n \alpha_i(x) \phi(\hat{y}_i),\nonumber\\
  g_n^*(x) = \sum_{i=1}^n \alpha_i(x) \phi(y^*_i).
\end{gather}
It is known that $f^*$ and $f_n$ are retrieved from $g^*$ and $g_n$, through the
decoding, retrieving $f:\X\to\Y$ from $g:\X\to{\cal H}$ as
\begin{equation}
  \label{eq:decoding}
  f(x) = \argmin_{z\in\Y} \scap{\psi(z)}{g(x)},
\end{equation}
which explains the wording of {\em plug-in} estimator \citep{Ciliberto2020}.
We now introduce a {\em calibration inequality}, that relates the error between
$f_n$ and $f^*$ with surrogate error quantities.

\begin{lemma}[Calibration inequality]\label{lem:cal}
  When $\Y$ is finite, and the labels are a deterministic function of the input,
  {\em i.e.}, when   $\mu^*\vert_x$ is a Dirac for all $x\in\supp\nu_\X$, 
  for any weighting scheme that sums to one,
  {\em i.e.}, $\sum_{i=1}^n \alpha_i(x) = 1$ for all $x\in\supp\nu_\X$,  
  \begin{align}
    \label{eq:cal}
    &{\cal R}(f_n) - {\cal R}(f^*) \leq 4c_\psi \norm{g_n^* - g_n}_{L^1} \hspace*{2cm}
    \nonumber \\& \hspace*{1cm} \qquad+
    8c_\psi c_\phi \Pbb_X\paren{\norm{g_n^*(X) - g^*(X)} > \delta},
  \end{align}
  with $c_\psi = \sup_{z\in\Y} \norm{\psi(z)}$,
  $c_\phi = \sup_{z\in\Y}\norm{\phi(y)}$, and
  $\delta$ a parameter that depend on the geometry of $\ell$ and its
  decomposition through $\phi$.
\end{lemma}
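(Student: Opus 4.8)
The plan is to control the risk difference $\mathcal{R}(f_n) - \mathcal{R}(f^*)$ pointwise in $x$ and then take expectations over $X$, splitting according to whether the surrogate estimate $g_n^*(x)$ is close enough to the ground-truth surrogate $g^*(x)$ to guarantee that the decoding recovers the correct label. First I would write the pointwise excess risk: since $\mu^*\vert_x = \delta_{f^*(x)}$ is a Dirac, one has $\mathcal{R}(f) - \mathcal{R}(f^*) = \E_X[\ell(f(X), f^*(X))]$, and because $\ell(z,y) = \scap{\psi(z)}{\phi(y)}$ with $\phi(f^*(x)) = g^*(x)$, this equals $\E_X[\scap{\psi(f(X))}{g^*(X)} - \scap{\psi(f^*(X))}{g^*(X)}]$. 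The key elementary fact is a ``decoding stability'' statement: there is a $\delta > 0$ (depending only on the geometry of $\psi(\Y)$ relative to the $g^*(x)$, i.e. the margin by which $f^*(x)$ wins the $\argmin$ in the decoding) such that whenever $\norm{g(x) - g^*(x)} \le \delta$, the decoded label $\argmin_{z}\scap{\psi(z)}{g(x)}$ coincides with $f^*(x)$, and hence contributes zero to the excess risk.

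Next I would do the two-term split. On the event $\{\norm{g_n^*(X) - g^*(X)} \le \delta\}$, I want to further push $g_n$ close to $g^*$: by the triangle inequality $\norm{g_n(X) - g^*(X)} \le \norm{g_n(X) - g_n^*(X)} + \norm{g_n^*(X) - g^*(X)}$, and then argue (using $\sum_i \alpha_i(x) = 1$ and boundedness, so that the contribution of the first term is handled in expectation rather than pointwise, which is exactly why the $L^1$ norm $\norm{g_n^* - g_n}_{L^1}$ appears rather than an $L^\infty$ norm) that the excess risk on this event is bounded by a constant times $\E_X\norm{g_n^*(X) - g_n(X)}$. The Cauchy–Schwarz bound $\ell(f(x), f^*(x)) \le 2 c_\psi \norm{g(x)}$-type estimates, combined with the decoding comparison $\scap{\psi(f(x)) - \psi(f^*(x))}{g(x)} \le 0$, give a clean linear-in-$\norm{g_n - g^*}$ bound of the form used by Ciliberto et al.; carrying the constants gives the factor $4 c_\psi$. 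On the complementary event $\{\norm{g_n^*(X) - g^*(X)} > \delta\}$, I simply bound the per-point excess risk crudely by $2 c_\psi c_\phi$ (since $\abs{\ell(z,y)} \le c_\psi c_\phi$), times the probability of that event, picking up the factor $8 c_\psi c_\phi$ after tracking constants.

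The main obstacle, and the step requiring real care, is the passage from ``$g_n$ close to $g^*$ implies correct decoding'' to the quantitative $L^1$ bound with the right constant: one must avoid demanding $\norm{g_n - g^*}$ small pointwise (which would force an $L^\infty$ control that our local-averaging estimate does not provide), and instead exploit that an error in decoding is either harmless (no risk incurred) or incurs a bounded-and-Lipschitz-in-$g$ penalty, so that the expected penalty is linear in $\E_X\norm{g_n - g^*}$. Concretely, I expect to need the inequality $\ell(f_n(x), f^*(x)) \le 2 c_\psi \norm{g_n(x) - g^*(x)}$ whenever $f_n(x) \ne f^*(x)$, proved by comparing $\scap{\psi(f_n(x))}{g_n(x)} \le \scap{\psi(f^*(x))}{g_n(x)}$ with the target $\scap{\psi(f_n(x)) - \psi(f^*(x))}{g^*(x)} = \ell(f_n(x), f^*(x)) \ge 0$ and subtracting. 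Once that pointwise bound is in hand, integrating over the good event and using the triangle inequality $\norm{g_n - g^*} \le \norm{g_n - g_n^*} + \delta$ on that event yields the first term (the $\delta$ part being absorbed since on the good event the decoding is anyway controlled, or folded into the second term's constant); assembling the two events then gives exactly \eqref{eq:cal}.
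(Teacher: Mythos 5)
Your proposal follows essentially the same route as the paper's proof. The paper imports from Cabannes et al. (2021, Lemma 3) exactly the pointwise fact you re-derive from scratch — the excess risk at $x$ vanishes when $\norm{g_n(x)-g^*(x)}$ is below the margin $d(g^*(x),F)$ to the set $F$ of surrogate values with ambiguous decoding, and is otherwise bounded by $2c_\psi\norm{g_n(x)-g^*(x)}$ via the standard comparison inequality you state — and then, like you, splits by the triangle inequality into an $L^1$ term in $g_n-g_n^*$ and a probability term in $g_n^*-g^*$, using that both $g_n^*$ and $g^*$ are averages of $\phi(y)$'s (weights summing to one, determinism) so that $\norm{g_n^*-g^*}_{L^\infty}\leq 2c_\phi$.

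The one step you leave loose is the absorption of the additive $\delta$ on the good event, and of your two suggested fixes only one is viable. It cannot be ``folded into the second term's constant'': that term is $8c_\psi c_\phi$ times a probability and cannot swallow a constant offset $2c_\psi\delta$, which would otherwise ruin the bound. The correct mechanism — which is exactly what the paper's indicator algebra ($\ind{a>\delta}a\leq 2\ind{b>\delta/2}b+2\ind{c>\delta/2}c$ whenever $a\leq b+c$) implements — is that a decoding error can only occur when $\norm{g_n(x)-g^*(x)}$ exceeds the margin; hence on the event $\norm{g_n^*(x)-g^*(x)}\leq\delta$, with $\delta$ chosen at most half the (uniform, since $\Y$ is finite and each $\phi(y)$ decodes uniquely) margin, any $x$ contributing a nonzero penalty satisfies $\norm{g_n(x)-g_n^*(x)}\geq\delta$, so $\norm{g_n(x)-g^*(x)}\leq\norm{g_n(x)-g_n^*(x)}+\delta\leq 2\norm{g_n(x)-g_n^*(x)}$ and the offset is absorbed multiplicatively, yielding precisely the factor $4c_\psi$ in front of $\norm{g_n-g_n^*}_{L^1}$. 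With that absorption made explicit, your argument coincides with the paper's.
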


This lemma, proven in Appendix \ref{proof:cal}, separates a part reading in
$\norm{g_n - g_n^*}$, due to the {\em disambiguation error} between $(\hat{y}_i)$ and
$(y_i^*)$ together with the {\em stability} of the learning algorithm when
substituting $(\hat{y}_i)$ for $(y_i^*)$, and a part in $\norm{g_n^* - g^*}$
due to the {\em consistency} of the fully supervised learning algorithm. The
expression of the first part relates to Theorem 7 in \citet{Ciliberto2020} while
the second part relates to Theorem 6 in \citet{Cabannes2021}.

\subsection{Classical learnability assumptions}
In the following, we suppose that the weights $\alpha$ are given by nearest
neighbors, that $\X$ is a compact metric space endowed with a distance $d$,
that $\Y$ is finite and that $\ell$ is proper in the sense that it strictly positive except on the diagonal of $\Y\times\Y$ diagonal where it is zero.
We now review classical assumptions to prove consistency.
First, assume that $\nu_\X$ is regular in the following sense.

\begin{assumption}[$\nu_\X$ well-behaved]\label{ass:mass}
  Assume that $\nu_\X$ is such that there exists $h_1, c_\mu, q > 0$ satisfying,
  with ${\cal B}$ designing balls in $\X$, 
  \[
    \forall\, x\in\supp\nu_\X,\ \forall\, r < h_1,
    \qquad \nu_\X({\cal B}(x, r)) > c_\mu r^q. 
  \]
\end{assumption}

Assumption \ref{ass:mass} is useful to make sure that neighbors in ${\cal D}_n$
are closed with respect to the distance $d$, it is usually derived by assuming
that $\X$ is a subset of $\R^q$; that $\nu_\X$ has a density $p$ against the
Lebesgue measure $\lambda$ with {\em minimal mass} $p_{\min}$ in the sense that
for any $x\in\supp\nu_\X$, $p(x) > p_{\min}$; and that $\supp\nu_\X$ has regular
boundary in the sense that
$\lambda({\cal B}(x, r) \cap \supp\nu_\X) \geq c\lambda({\cal B}(x, r)$
for any $x\in\supp\nu_\X$ and $r < h$ \citep[{\em e.g.,}][]{Audibert2007}.

We now switch to a classical assumption in partial labelling, allowing for
population disambiguation.

\begin{assumption}[Non ambiguity, \citet{Cour2011}]
  \label{ass:non-ambiguity}
  Assume the existence of $\eta \in [0, 1)$, such that for any
  $x\in\supp\nu_\X$, there exists $y_x \in \Y$, such that
  $\Pbb_{\nu}\paren{y_x \in S \midvert X=x} = 1$, and
  \[
    \forall\, z\neq y_x,\quad
    \Pbb_{\nu}\paren{z \in S \midvert X=x} \leq \eta.
  \]
\end{assumption}

Assumption \ref{ass:non-ambiguity} states that when given the full distribution
$\nu$, there is one, and only one, label that is coherent with every observable
sets for a given input. It is a classical assumption
in literature about the learnability of the partial labelling problem
\citep[{\em e.g.,}][]{Liu2014}. When $\ell$ is proper, this implies that
$\mu^*\vert_x = \delta_{y_x}$, and $f^*(x) = y_x$. 

Finally, we assume that $g^*$ is regular. 
As we are considering local averaging method, 
we will use Lipschitz-continuity, 
which is classical in such a setting.\footnote{Its
  generalization through H\"older-continuity would work too.}

\begin{assumption}[Regularity of $g^*$]\label{ass:lipschitz}
  Assume that there exists $c_g > 0$, such that for any $x, x'\in\X$, we have
  \[
    \norm{g^*(x) - g^*(x')}_{\cal H} \leq c_g d(x, x').
  \]
\end{assumption}

It should be noted that regularity of $g^*$, Assumption \ref{ass:lipschitz},
together with determinism of $\mu^*\vert_x$ inherited from
Assumption~\ref{ass:non-ambiguity} implies that classes  
$\X_y = \brace{x\midvert f^*(x) = y}$ are separated in $\X$, in the sense that
there exists $h_2 > 0$, such that for any $y, y' \in \Y$ and 
$(x, x') \in \X_y \times \X_{y'}$, $d(x, x') > h_2$, which is a classical
assumption to derive consistency of semi-supervised learning algorithm 
\citep[{\em e.g.,}][]{Rigollet2007}.
Those implications results from the fact that separation in $\Y$ (hard Tsybakov condition) plus Lipschitzness of $g^*$ implies separation of classes in $\X$, as we details in Appendix~\ref{proof:ass}.

\subsection{Exponential convergence rates}
We are now ready to state our convergence result.
We introduce $h = \min(h_1, h_2)$
and $p = c_\mu h^q$, so that for any $x \in \supp\nu_\X$, $\nu_\X({\cal B}(x, h)) > p$.

\begin{theorem}[Exponential convergence rates]
  \label{thm:convergence}
  When the weights $\alpha$ are given by nearest neighbors,
  under Assumptions \ref{ass:mass}, \ref{ass:non-ambiguity} and
  \ref{ass:lipschitz}, the excess of risk in  Eq.~\eqref{eq:excess} is bounded by
  \begin{align}
    \label{eq:convergence}
    {\cal E}(f_n) &\leq 8c_\psi c_\phi (n+1)\exp\paren{-{\frac{np}{16}}} \hspace*{2cm}
    \nonumber\\&\hspace*{1cm} \qquad + 8c_\psi c_\phi m \exp\paren{-k\abs{\log(\eta)}},
  \end{align}
  as soon as $k < np/ 4$, with $m = \card{\Y}$.
  By taking $k_n = k_0 n$, for $k_0 < p/4$, this implies exponential convergence
  rates ${\cal E}(f_n) = O(n\exp(-n))$.
\end{theorem}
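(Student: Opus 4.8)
The strategy is to feed the calibration inequality of Lemma~\ref{lem:cal} into two separate bounds: one for the \emph{disambiguation-plus-stability} term $\norm{g_n^* - g_n}_{L^1}$ and one for the \emph{consistency} term $\Pbb_X(\norm{g_n^*(X) - g^*(X)} > \delta)$. Since Assumption~\ref{ass:non-ambiguity} together with properness of $\ell$ forces $\mu^*\vert_x = \delta_{y_x}$, the hypotheses of Lemma~\ref{lem:cal} are met, so it suffices to control these two quantities and then take expectations over $\mathcal D_n$.

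First I would handle the consistency term. Under nearest-neighbor weights, $g_n^*(x) = k^{-1}\sum_{i \in \mathrm{NN}_k(x)} \phi(y_i^*)$. The key geometric fact is that the classes $\X_y$ are $h_2$-separated (Appendix~\ref{proof:ass}), so once all $k$ nearest neighbors of $x$ fall inside $\mathcal B(x, h)$ with $h = \min(h_1, h_2)$, they all lie in the same class $\X_{f^*(x)}$; then $y_i^* = f^*(x_i) = f^*(x)$ for each such $i$ (using that $y_i^*$ is the deterministic disambiguation), hence $g_n^*(x) = \phi(f^*(x)) = g^*(x)$ exactly, so the event $\{\norm{g_n^*(X) - g^*(X)} > \delta\}$ is contained in the event that some neighbor escapes $\mathcal B(X, h)$. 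That bad event, for fixed $x$, means fewer than $k$ of the $n$ sample points landed in $\mathcal B(x, h)$, which has mass $> p$; a Chernoff/binomial tail bound gives probability at most $\exp(-np/8)$ roughly, provided $k < np/4$ — this is where the $(n+1)\exp(-np/16)$ shape comes from (the extra $n+1$ factor presumably absorbing a union or integration step, perhaps over the empirical measure of the $x_i$ themselves). Taking $\E_{\mathcal D_n}$ and multiplying by $8 c_\psi c_\phi$ yields the first term of \eqref{eq:convergence}.

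Next, the disambiguation term $\norm{g_n^* - g_n}_{L^1} = \E_X \norm{\sum_i \alpha_i(X)(\phi(y_i^*) - \phi(\hat y_i))}$. The plan is to show that, with high probability, $\hat y_i = y_i^*$ for \emph{all} $i$ simultaneously, so this term vanishes outside a small-probability event. This is where Assumption~\ref{ass:non-ambiguity} does its work: one argues that the minimizer of the empirical disambiguation objective \eqref{eq:disambiguation} must pick, at each $i$, the label $y_{x_i}$ that is compatible with every observed superset — and a label $z \ne y_{x_i}$ appears in $s_i$ only with probability $\le \eta$, so the chance that the combinatorial structure allows a wrong consensus decays like $\exp(-k\abs{\log\eta})$ per point, and a union bound over the $m = \card\Y$ possible wrong labels (rather than over $n$, exploiting that the NN structure couples points) gives the $m\exp(-k\abs{\log\eta})$ term. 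Bounding $\norm{\phi(y_i^*) - \phi(\hat y_i)} \le 2 c_\phi$ on the bad event and $0$ otherwise, and using $\sum_i \alpha_i(X) = 1$, multiplying by $4 c_\psi$, produces the second term. Finally, substituting $k = k_0 n$ with $k_0 < p/4$ makes both terms $O(n \exp(-cn))$, giving the stated rate.

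The main obstacle is the disambiguation argument: showing $\hat y_i = y_i^*$ with the claimed probability requires carefully unpacking how the global non-convex optimization in \eqref{eq:disambiguation} interacts with the nearest-neighbor weights — one must verify that a locally correct assignment is genuinely the global optimum of the coupled objective over all of $C_n$, not merely a stationary point, and that the failure event really does contract to something controlled by $\exp(-k\abs{\log\eta})$ rather than accumulating an uncontrolled union over $n$. The consistency term, by contrast, is a fairly standard nearest-neighbor concentration argument once the exact-recovery observation $g_n^*(x) = g^*(x)$ on the good event is in place.
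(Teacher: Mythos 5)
Your plan follows essentially the same route as the paper: the calibration inequality of Lemma~\ref{lem:cal}, class separation (Assumptions~\ref{ass:non-ambiguity} and \ref{ass:lipschitz}) plus a Chernoff binomial tail for the consistency term, and the intersection-of-supersets argument under Assumption~\ref{ass:non-ambiguity} for the disambiguation term; the overall structure is the right one.

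Two bookkeeping points differ from the paper, and they are exactly the places you flag as obstacles. First, the factor $(n+1)$ does not come from the consistency term: that term involves only a single fresh $X$, so it is bounded by $\exp(-np/8)$ with no union bound. The $n$ enters through the disambiguation term, because the intersection argument only applies when \emph{every} training point $X_i$ has its $k$-th nearest neighbour within $h$ (so that all its neighbours lie in its class and $y_{x_i}\in S_j$ for each of them); the paper controls the complement of this event by a union bound over the $n$ training points, paying the worst-case error $2c_\phi$ there, which gives $n\exp(-(n-1)p/8)$ and, combined with the consistency term, the stated $(n+1)\exp(-np/16)$. Second, proving $\hat y_i=y_i^*$ \emph{simultaneously} for all $i$ would cost a union bound over $n$, yielding $nm\exp(-k\abs{\log\eta})$ — still $O(n\exp(-n))$ after setting $k=k_0n$, but not the stated constant. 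The paper avoids this by never requiring simultaneity: it bounds $\norm{g_n-g_n^*}_{L^1}\leq 2c_\phi k^{-1}\sum_{i}\Pbb_X\paren{X_i\in{\cal N}(X)}\ind{\hat y_i\neq y_i^*}$ and uses $k^{-1}\sum_i\Pbb_X\paren{X_i\in{\cal N}(X)}=1$, so only the per-point failure probability $m\eta^k$ appears. Finally, the global-optimality worry is resolved directly on the good event: the assignment $\hat y_i=y_i^*$, $z_i=y_i^*$ attains objective value zero (the loss is proper), so any competing global optimum must also attain zero, which forces $\hat y_i\in\bigcap_{j:X_j\in{\cal N}(X_i)}S_j$, an event of probability at most $m\eta^k$ by Assumption~\ref{ass:non-ambiguity}.
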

\begin{proof}[Sketch for Theorem \ref{thm:convergence}]
  In essence, based on Lemma \ref{lem:cal}, Theorem \ref{thm:convergence} can be
  understood as two folds. 
  \begin{itemize}
    \item A fully supervised error between $g_n^*$ and $g^*$. This error can
      be controlled in $\exp(-np)$ as the non-ambiguity assumption implies a hard
      Tsybakov margin condition, a setting in which {\em the fully supervised
        estimate $g_n^*$ is known to converge to the population solution $g^*$
        with such rates} \citep{Cabannes2021}. 
    \item A weakly disambiguation error, that is exponential too, since, based
      on Assumption \ref{ass:non-ambiguity}, disambiguating between $z \in \Y$
      and $y_x$ from $k$ sets $S$ sampled accordingly to $\nu\vert_x$ can be
      done in $\eta^k$, and disambiguating between all $z\neq y_x$ and $y_x$
      in $m\eta^k = m \exp(-k\abs{\log(\eta)})$. 
  \end{itemize}
  Appendix~\ref{proof:convergence} provides details.
\end{proof}
Theorem \ref{thm:convergence} states that under a non-ambiguity assumption and a
regularity assumption implying no-density separation, one can expect exponential
convergence rates of $f_n$ learned with weakly supervised data to $f^*$ the
solution of the fully supervised learning problem,
measured with excess of fully supervised risk. 
Because of the exponential convergence, we could derive
polynomial convergence rates for a broader class of problems that are
approximated by problems satisfying assumptions of Theorem \ref{thm:convergence}.
{\em The derived rates in $n\exp(-n)$ should be compared with rates in $n^{-1/2}$ and $n^{-1/4}$},
respectively derived, {\em under the same assumptions}, by \citet{Cour2011,Cabannes2020}.

\subsection{Discussion on Assumptions}
While we have retaken classical assumptions from literature, 
those assumptions are quite strong, which allows us,
by understanding their strength, to derive exponential convergence rates.
Assumptions \ref{ass:mass} and \ref{ass:lipschitz} are classical in the nearest neighbor literature with full supervision. 
If we were using (reproducing) kernel methods to define the weighting scheme $\alpha$, those assumptions would be mainly replaced with 
``$g^*$ belonging to the RKHS''. 
Assumption \ref{ass:non-ambiguity} is the strongest assumption in our view, that we will now discuss.

\paragraph{How to check it in practice ?} 
First, for Assumption \ref{ass:non-ambiguity} to hold, the labels have to be a deterministic function of the inputs. 
In other words, a zero error is achievable. 
Finally, Assumption \ref{ass:non-ambiguity} is related to  dataset collection. 
If dealing with images, weak supervision could take the form of some information on shape, color, or texture, etc.,
Assumption \ref{ass:non-ambiguity} supposes that the weak information potentially given on a specific image~$x$, allows to retrieve the unique label $y$ of the image
({\em e.g.}, a “pig” could be recognized from its shape and its color). 
This is a reasonable assumption, if, for a given $x$, we ask at random a data labeller to provide us information on shape, color, or texture, etc. 
However, it will not be the case, if for some reasons ({\em e.g.} the dataset is built from several weakly annotated datasets), 
in some regions of the input space, we only get shape information, and in other regions, we only get color information. 
In particular, it is not verified for semi-supervised learning when the support of the unlabelled data distribution is not the same as the support of the labelled input data distribution.

\paragraph{How to relax it and what results to expect?} 
Previous works used Assumption \ref{ass:non-ambiguity} to derive a calibration inequality between the infimum loss to the original loss \citep[{\em e.g.}, see Proposition 2 by][]{Cabannes2020}. 
In contrast, we relate the surrogate and original problem through a 
refined calibration inequality \eqref{eq:cal}.
This technical progress allows us to derive exponential convergence rates similarly to the work of \citet{Cabannes2021}.
Importantly, in comparison with previous work, our calibration inequality Lemma \ref{lem:cal} can easily be extended without the determinism assumption provided by Assumption \ref{ass:non-ambiguity}.
Essentially, in our work, Assumption \ref{ass:non-ambiguity} is used to simplify the study of $(\hat{y}_i)_{i\leq n}$ given by the disambiguation algorithm \eqref{eq:disambiguation}, 
and therefore the study of the disambiguation error in Eq. \eqref{eq:cal}.
The study of $(\hat{y}_i)_{i\leq n}$ without Assumption \ref{ass:non-ambiguity} would requires other tools than the one presented in this paper. 
It could be studied in the realm of graphical model and message passing algorithm, or with Wasserstein distance and topological considerations on measures.
With much milder forms of Assumption \ref{ass:non-ambiguity}, we expect the rates to degrade smoothly with respect to a parameter defining the hardness of the problem, similarly to the works of \citet{Audibert2007,Cabannes2021}. 

\section{Optimizaton Considerations}
\label{sec:optim}

In this section, we focus on implementations to solve Eq.~\eqref{eq:disambiguation}.
We explain why disambiguation objectives, such as Eq.~\eqref{eq:principle} are
intrinsically non-convex and express a heuristic strategy to solve
Eq.~\eqref{eq:disambiguation} besides non-convexity in classical well-behaved
instances of partial labelling.
Note that we do not study implementations to solve Eq.~\eqref{eq:estimate} as this
study has already been done by \citet{Nowak2019}.
We end this section by considering a practical example to make derivations more concrete.

\subsection{Non-convexity of disambiguation objectives}

For readability, suppose that $\X$ is a singleton, justifying to remove the 
dependency on the input in the following.
Consider $\nu \in \prob{2^\Y}$ a distribution modelling weak supervision.
While the domain $\brace{\mu\in\prob{\Y} \midvert \mu \vdash \nu}$ is convex, a
disambiguation objective ${\cal E}:\prob{\Y} \to \R$ defining 
$\mu^* \in \argmin_{\mu\vdash\nu} {\cal E}(\mu)$, similarly to
Eq.~\eqref{eq:principle}, that is minimized for deterministic distributions,
which correspond to $\mu$ a Dirac, {\em i.e.}, minimized on vertices of its
definition domain $\prob{\Y}$, can not be convex.
In other terms, any disambiguation objective that pushes toward distributions
where targets are deterministic function of the input, as mentioned in Remark
\ref{rmk:determinism}, can not be convex.

Indeed, smooth disambiguation objectives such as entropy and our piecewise linear
loss-based principle~\eqref{eq:principle}, reading pointwise
\(
  {\cal E}(\mu) = \inf_{z\in\Y} \E_{Y\sim\mu}[\ell(z, Y)],
\)
are concave. Similarly, its quadratic variant
\(
  {\cal E}'(\mu) = \E_{Y, Y'\sim\mu}[\ell(Y, Y')],
\)
is concave as soon as $(\ell(y, y'))_{y, y'\in\Y}$ is semi-definite negative.
We illustrate those considerations on a concrete example with graphical
illustration in Appendix \ref{app:example}.
We should see how this translates on generic implementations to solve
the empirical objective~\eqref{eq:disambiguation}.

\subsection{Generic implementation for Eq.~\eqref{eq:disambiguation}}

Depending on $\ell$ and on the type of observed set $(s_i)$,
Eq.~\eqref{eq:disambiguation} might be easy to solve.
In the following, however, we will introduce optimization considerations to solve
it in a generic structured prediction fashion. To do so, we recall the
decomposition of $\ell$~\eqref{eq:loss} and rewrite
Eq.~\eqref{eq:disambiguation} as 
\begin{equation*}
  (\hat{y}_i)_{i\leq n} \in \argmin_{(y_i) \in C_n} \inf_{(z_i) \in \Y^n}  \sum_{i,j=1}^n \alpha_j(x_i) \psi(z_i)^\top \phi(y_j).
\end{equation*}
Since, given $(y_j)$, the objective is linear in $\psi(z_j)$, the constraint
$\psi(z_j) \in \psi(\Y)$ can be relaxed with $\zeta_i \in \hull\psi(\Y)$.\footnote{The
  minimization pushes towards extreme points of the definition domain.}
Similarly, with respect to $\phi(y_j)$, 
this objective is the infimum of linear functions, therefore is
concave, and the constraint $\phi(y_j) \in \phi(s_j)$, could be relaxed with
$\xi_i \in \hull\phi(s_j)$. Hence, with ${\cal H}_0 = \hull\psi(\Y)$ and
$\Gamma_n = \prod_{j\leq n} \hull\phi(s_j)$, the optimization is cast as
\begin{equation}
  \label{eq:algo}
  (\hat{\xi}_i)_{i\leq n} \in \argmin_{(\xi_i) \in \Gamma_n} 
  \inf_{(\zeta_i) \in{\cal H}_0^n} \sum_{i,j=1}^n \alpha_j(x_i) \zeta_i^\top \xi_j.
\end{equation}
Because of concavity, $(\hat{\xi}_i)$ will be an extreme point of
$\Gamma_n$, that could be decoded into $\hat{y}_i = \phi^{-1}(\hat{\xi}_i)$.
However, it should be noted that if only interested in $f_n$ and not in the
disambiguation $(\hat{y}_i)$, this decoding can be avoided, since
Eq.~\eqref{eq:estimate} can be rewritten as
$f_n(x) \in \argmin_{z\in\Y} \psi(z)^\top \sum_{i=1}^n \alpha_i(x) \hat{\xi}_i$.

\subsection{Alternative minimization with good initialization}

To solve Eq.~\eqref{eq:algo}, we suggest to use an alternative minimization
scheme. The output of such an scheme is highly dependent to the variable
initialization.
In the following, we introduce well-behaved problem, where $(\xi_i)_{i\leq n}$
can be initialized smartly, leading to an efficient implementation to solve
Eq.~\eqref{eq:algo}.

\begin{definition}[Well-behaved partial labelling problem]
  \label{def:init}
  A partial labelling problem $(\ell, \nu)$ is said to be well-behaved if for
  any $s \in \supp\nu_{2^\Y}$, there exists a signed measure $\mu_s$ on $\Y$
  such that the function from $\Y$ to $\R$ defined as 
  $z\to \int_{\Y} \ell(z, y) \diff\mu_s(y)$ is minimized for, and only for, 
  $z \in s$.
\end{definition}

We provide a real-world example of a well-behaved problem in Section
\ref{sec:ranking} as well as a synthetic example with graphical illustration in
Appendix \ref{app:example}.
On those problems, we suggest to solve Eq.~\eqref{eq:algo} by considering the initialization
$\xi^{(0)}_i = \E_{Y\sim\mu_{s_i}}[\phi(Y)]$, and performing alternative
minimization of Eq.~\eqref{eq:algo}, until attaining $\xi^{(\infty)}$ as the
limit of the alternative minimization scheme (which exists since each step
decreases the value of the objective in Eq.~\eqref{eq:algo} and there is a
finite number of candidates for $(\xi_i)$). It corresponds to a disambiguation
guess $\tilde{y}_i = \phi^{-1}(\xi_i^{(\infty)})$. Then we suggest to learn
$\hat{f}_n$ from $(x_i, \tilde{y}_i)$ based on Eq.~\eqref{eq:estimate}, and
existing algorithmic tools for this problem \citep{Nowak2019}.
To assert the well-groundedness of this heuristic, we refer to the following
proposition, proven in Appendix \ref{proof:init}.

\begin{proposition}
  \label{prop:init}
  Under the non-ambiguity hypothesis, Assumption \ref{ass:non-ambiguity},
  the solution of Eq.~\eqref{eq:solution} is characterized by
  \(
  f^* \in \argmin_{f:\X\to\Y} \E_{(X, S) \sim \nu}\bracket{\E_{Y\sim\mu_S}[\ell(f(X), Y)]}.
  \)
  Moreover, if the surrogate function $g_n^{\circ}:\X\to{\cal H}$ defined as
  $g_n^{\circ}(x) = \sum_{i=1}^n\alpha_i(x)\xi_{s_i}$,
  with $\xi_s = \E_{Y\sim\mu_s}[\phi(Y)]$,
  converges towards $g^\circ(x) = \E_{S\sim\nu\vert_x}[\xi_S]$ in $L^1$,
  $f_n^{\circ}$ defined through the decoding Eq.~\eqref{eq:decoding} converges
  in risk towards $f^*$.
\end{proposition}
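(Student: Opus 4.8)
The plan is to handle the two assertions of the proposition in turn, both by reducing to a pointwise analysis over $x\in\supp\nu_\X$ and using the loss decomposition~\eqref{eq:loss}; throughout I work in the well-behaved setting of Definition~\ref{def:init}, where $\mu_s$, hence $\xi_s=\E_{Y\sim\mu_s}[\phi(Y)]$, is available. For the characterisation of $f^*$, linearity of the inner product gives $\int_\Y\ell(z,y)\diff\mu_s(y)=\scap{\psi(z)}{\xi_s}$, so minimising $\E_{(X,S)\sim\nu}[\E_{Y\sim\mu_S}[\ell(f(X),Y)]]$ amounts, for each $x$, to minimising over $z\in\Y$ the quantity $J_x(z):=\E_{S\sim\nu\vert_x}[\scap{\psi(z)}{\xi_S}]=\scap{\psi(z)}{g^\circ(x)}$. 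By Definition~\ref{def:init}, for every $s$ charged by $\nu\vert_x$ the map $z\mapsto\scap{\psi(z)}{\xi_s}$ attains its minimum exactly on $z\in s$; by Assumption~\ref{ass:non-ambiguity} the label $y_x$ belongs to every such $s$ (as $\Pbb_\nu\paren{y_x\in S\midvert X=x}=1$), so $y_x$ minimises every summand and hence $J_x$. For uniqueness, any $z\neq y_x$ has $\Pbb_\nu\paren{z\notin S\midvert X=x}\ge 1-\eta>0$, and on the event $\{z\notin S\}$ the summand $\scap{\psi(z)}{\xi_S}$ is \emph{strictly} above $\scap{\psi(y_x)}{\xi_S}=\min_{z'\in\Y}\scap{\psi(z')}{\xi_S}$, while it is never below; thus $J_x(z)>J_x(y_x)$ and $\argmin_z J_x(z)=\{y_x\}$. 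Since Assumption~\ref{ass:non-ambiguity} with properness of $\ell$ forces $\mu^*\vert_x=\delta_{y_x}$, i.e.\ $f^*(x)=y_x$ for the $f^*$ of~\eqref{eq:solution}, the two characterisations coincide.

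For the convergence statement I would establish a calibration inequality anchored at $g^\circ$, in the spirit of Lemma~\ref{lem:cal}. The computation above shows the decoding~\eqref{eq:decoding} of $g^\circ$ equals $f^*$, and it additionally yields a \emph{uniform positive margin}: with
\[
  \gamma_0 := \min\brace{\scap{\psi(z)}{\xi_s} - \min_{z'\in\Y}\scap{\psi(z')}{\xi_s} \;:\; s\in\supp\nu_{2^\Y},\ z\notin s} > 0,
\]
a minimum of finitely many strictly positive reals (here finiteness of $2^\Y$ is used; the case $m=1$ is trivial), the summand $\scap{\psi(z)}{\xi_S}-\scap{\psi(y_x)}{\xi_S}$ vanishes on $\{z\in S\}$ and is at least $\gamma_0$ on $\{z\notin S\}$, so for all $x\in\supp\nu_\X$ and $z\neq y_x$,
\[
  \scap{\psi(z)}{g^\circ(x)} - \scap{\psi(y_x)}{g^\circ(x)} \;\ge\; \gamma_0\,\Pbb_\nu\paren{z\notin S\midvert X=x} \;\ge\; \gamma_0(1-\eta).
\]
Consequently, whenever $\norm{g_n^\circ(x)-g^\circ(x)}_{\cal H}<\gamma_0(1-\eta)/(2c_\psi)$, Cauchy--Schwarz together with $\norm{\psi(z)-\psi(y_x)}\le 2c_\psi$ give $\scap{\psi(z)}{g_n^\circ(x)}>\scap{\psi(y_x)}{g_n^\circ(x)}$ for every $z\neq y_x$, hence $f_n^\circ(x)=y_x=f^*(x)$. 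Markov's inequality then yields $\Pbb_X\paren{f_n^\circ(X)\neq f^*(X)}\le \tfrac{2c_\psi}{\gamma_0(1-\eta)}\norm{g_n^\circ-g^\circ}_{L^1}$, and since $\mu^*\vert_x=\delta_{y_x}$ and $\ell(z,y)=\scap{\psi(z)}{\phi(y)}\le c_\psi c_\phi$ by Cauchy--Schwarz,
\[
  {\cal R}(f_n^\circ)-{\cal R}(f^*) = \E_X\bracket{\ell(f_n^\circ(X),f^*(X))} \le c_\psi c_\phi\,\Pbb_X\paren{f_n^\circ(X)\neq f^*(X)} \le \tfrac{2c_\psi^2 c_\phi}{\gamma_0(1-\eta)}\,\norm{g_n^\circ-g^\circ}_{L^1},
\]
which tends to $0$ by hypothesis (taking expectations over ${\cal D}_n$ to absorb the randomness of $g_n^\circ$).

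The crux, and essentially the only obstacle, is the derivation of the margin $\gamma_0(1-\eta)$: one must recognise that Definition~\ref{def:init} and Assumption~\ref{ass:non-ambiguity} jointly do more than pin the decoding of $g^\circ$ to $f^*$ --- they produce a \emph{hard}, $x$-independent separation in the surrogate space, which is precisely what upgrades mere $L^1$ convergence of $g_n^\circ$ to convergence in risk. Once this margin is available the remaining steps are the standard plug-in/calibration argument of \citet{Ciliberto2020,Cabannes2021}, merely re-centred from $g^*$ onto $g^\circ$; the minor care points (the precise mode of the $L^1$ convergence, ties in the $\argmin$ of the decoding) are routine and handled exactly as there.
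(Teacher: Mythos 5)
Your proof is correct, and while the first half follows essentially the paper's own argument (pointwise reduction, $y_x\in S$ almost surely under Assumption~\ref{ass:non-ambiguity}, and Definition~\ref{def:init} making $y_x$ a minimizer of every summand — you in fact prove the stronger statement that $y_x$ is the \emph{unique} pointwise minimizer, which the paper does not need and does not show), the second half takes a genuinely different route. The paper defines the signed measure $\mu^\circ$ with $\mu^\circ\vert_x=\E_{S\sim\nu\vert_x}\E_{Y\sim\mu_S}[\delta_Y]$, invokes the comparison inequality of the structured-prediction framework of \citet{Ciliberto2016,Ciliberto2020} to get convergence of $f_n^\circ$ to $f^\circ=f^*$ in terms of the $\mu^\circ$-risk, and then appeals (informally, ``under mild hypothesis on $\mu^\circ$'') to a calibration step à la Proposition~2 of \citet{Cabannes2020} to transfer this to the $\mu$-risk. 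You instead build a self-contained hard-margin calibration anchored at $g^\circ$: the uniform gap $\gamma_0(1-\eta)>0$ turns $L^1$ convergence of $g_n^\circ$ directly into $\Pbb_X(f_n^\circ(X)\neq f^*(X))\to 0$ and hence risk convergence, with an explicit linear rate in $\norm{g_n^\circ-g^\circ}_{L^1}$. What your route buys is that it avoids both the hand-waved $\mu^\circ$-to-$\mu$ transfer and any delicacy about applying the plug-in framework to a \emph{signed} conditional measure, and it is quantitatively stronger; what it costs is the assumption that $\Y$ (hence $\supp\nu_{2^\Y}$) is finite, needed for $\gamma_0>0$ — consistent with the paper's Section~\ref{sec:stat} setting but not restated in the proposition, and not needed by the paper's more abstract citation-based argument (e.g., it would matter for interval regression, where $\Y=\R$). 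Minor points (measurable selection in the pointwise $\argmin$, expectation over ${\cal D}_n$) you flag and handle adequately.
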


Given that our algorithm scheme is initialized for
$\xi_i^{(0)} = \xi_{s_i}$ and $\zeta_i^{(0)} = f_n^{\circ}(x_i)$ and stopped
once having attained $\xi_i^{(\infty)}$ and $\zeta_i^{(\infty)} = \hat{f}_n(x_i)$,
$\hat{f}_n$ is arguably better than $f_n^{\circ}$, which given
consistency result exposed in Proposition \ref{prop:init}, is already good enough.

\begin{figure*}[t]
  \centering
  \vskip -0.1in
  \includegraphics{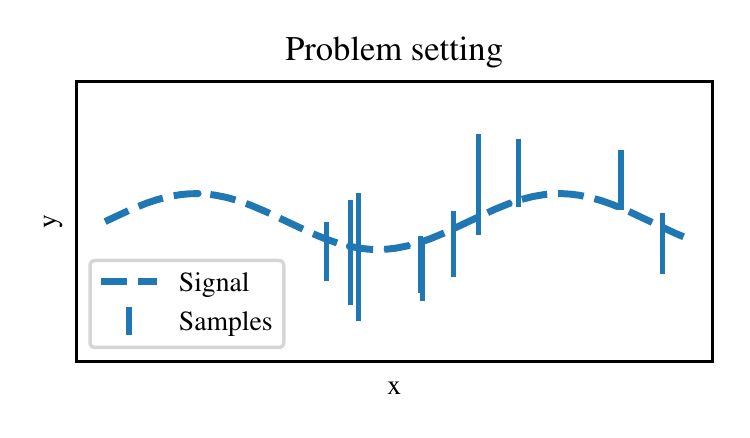}
  \includegraphics{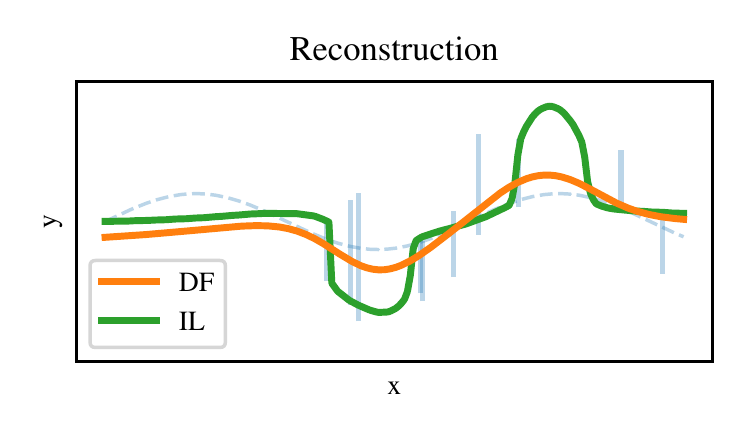}
  \vskip -0.2in
  \caption{
    Interval regression. See Appendix \ref{app:experiments} for the exact
    reproducible experimental setup (Left) Setup. The goal is to learn $f^*:\X\to\R$
    represented by the dashed line, given samples $(x_i, s_i)$, where $(s_i)$
    are intervals represented by the blue segments. 
    (Right) We compare the Infimum Loss (IL) baseline~\eqref{eq:baseline} shown
    in green, with our Disambiguation Framework (DF), 
    Eqs.~\eqref{eq:disambiguation} and \eqref{eq:estimate}, shown in orange;
    with weights $\alpha$ given by kernel ridge regression. (DF) retrieves
    $\hat{y_i}$ before learning a smooth $f_n$ based on $(x_i, \hat{y}_i)$,
    while (IL) implicitly retrieves $\hat{y}_i(x)$ differently for each input,
    leading to irregularity of the consequent estimator of $f^*$.
  }
  \label{fig:ir}
  \vskip -0.1in
\end{figure*}

\begin{remark}[IQP implementation for Eq.~\eqref{eq:disambiguation}]
  Other heuristics to solve Eq.~\eqref{eq:disambiguation} are conceivable.
  For example, considering $z_i = y_i$ in this equation, we remark that
  the resulting problem is isomorphic to an integer quadratic program (IQP).  
  Similarly to integer linear programming, this problem can be approached with
  relaxation of the ``integer constraint'' to get a real-valued solution,  
  before ``thresholding'' it to recover an integer solution. 
  This heuristic can be seen as a generalization of the Diffrac algorithm
  \citep{Bach2007,Joulin2010}.
  we present it in details in Appendix~\ref{app:diffrac}.
\end{remark}

\begin{remark}[Link with EM, \citep{Dempster1977}]
  Arguably, our alternative minimization scheme, optimizing respectively the
  targets $\xi_i = \phi(y_i)$ and the function estimates $\zeta_i = \psi(f_n(x_i))$
  can be seen as the non-parametric version of the Expectation-Maximization
  algorithm, popular for parametric model \citep{Dempster1977}. 
\end{remark}

\subsection{Application: Ranking with partial ordering}
\label{sec:ranking}

Ranking is a problem consisting, for an input $x$ in an input space $\X$, to
learn a total ordering $y$, belonging to $\Y = \Sfrak_m$, modelling preference
over $m$ items. It is usually approach with the Kendall loss
$\ell(y, z) = - \phi(y)^\top \phi(z)$, with
$\phi(y) = (\sign\paren{y(i) - y(j)})_{i,j\leq m} \in \brace{-1,1}^{m^2}$ \citep{Kendall1938}.
Full supervision corresponds, for a given $x$, to be given a total ordering of
the $m$ items. This is usually not an option, but one could expect to be given
partial ordering that $y$ should follow \citep{Cao2007,Hullermeier2008,Korba2018}.
Formally, this equates to the observation of some, but not all, coordinates
$\phi(y)_i$ of the vector $\phi(y)$ for some $i\in I \subset \bbracket{1, m}^2$.

In this setting, $s\subset \Y$ is a set of total orderings that match the given
partial ordering. It can be represented by a vector $\xi_s \in {\cal H}$, that
satisfies the partial ordering observation, $(\xi_s)_I = \phi(y)_I$, and that is
agnostic on unobserved coordinates, $(\xi_s)_{^c I} = 0$.
This vector satisfies that $z\to\psi(z)^\top\xi_s$ is minimized for, and only
for, $z\in s$.  Hence, it constitutes a good initialization for the alternative
minimization scheme detailed above. We provide details in Appendix
\ref{proof:ranking}, where we also show that $\xi_s$ can be formally translated
in a $\mu_s$ to match the Definition \ref{def:init}, proving that ranking with
partial labelling is a well-behaved problem. 

Many real world problems can be formalized as a ranking problem with partial
ordering observations. For example, $x$ could be a social network user, and the
$m$ items could be posts of her connection that the network would like to order
on her feed accordingly to her preferences. One might be told that the user $x$
prefer posts from her close rather than from her distant connections, which
translates formally as the constraint that for any $i$ corresponding to a post
of a close connection and $j$ corresponding to a post of a distant connection,
we have $\phi(y)_{ij} = 1$.
Nonetheless, designing non-parametric structured prediction models that scale
well when the intrinsic dimension $m$ of the space $\Y$ is very large (such as
the number of post on a social network) remains an open problem, that this paper
does not tackle. 

\section{Related work}
\label{sec:litterature}

Weakly supervised learning has been approached through parametric and
non-parametric methods. Parametric models are usually optimized through maximum
likelihood \citep{Heitjan1991,Jin2002}.
\citet{Hullermeier2014} show that this approach, as formalized by
\citet{Denoeux2013}, equates to disambiguating sets by averaging candidates,
which was shown inconsistent by \citet{Cabannes2020} when data are {\em not
  missing at random}.
Among non-parametric models, \citet{Xu2004,Bach2007} developed an algorithm for
clustering, that has been cast for weakly supervised learning problem
\citep{Joulin2010,Alayrac2016}, leading to a disambiguation algorithm similar than ours, yet
without consistency results.
More recently, half-way between theory and practice, \citet{Gong2018} derived an
algorithm geared towards classification, based on a disambiguation objective,
incorporating several heuristics, such as class separation, and Laplacian
diffusion.  
Those heuristics could be incorporated formally in our model.

The infimum loss principle has been considered by several authors, among them
  \citet{Cour2011,Luo2010,Hullermeier2014}. It was recently analyzed through
the prism of structured prediction by \citet{Cabannes2020}, leading to a
consistent non-parametric algorithm that will constitute the baseline of our
experimental comparison.
This principle is interesting as it does not assume knowledge on the corruption
process $(S\vert Y)$ contrarily to the work of \citet{CidSueiro2014} or
\citet{vanRooyen2018}. 

The non-ambiguity assumption has been introduced by \citet{Cour2011} and is a
classical assumption of learning with partial labelling
\citep{Liu2014}. Assumptions of Lipschitzness and minimal mass are
classical assumptions to prove convergence of local averaging method
\citep{Audibert2007,Biau2015}.
Those assumptions imply class separation in $\X$, which has been leverage in
semi-supervised learning, justifying Laplacian regularization \citep{Rigollet2007,Zhu2003}.

Note that those assumptions might not hold on raw representation of the data,
but with appropriate metrics, which could be learned through unsupervised
\cite{Duda2000} or self-supervised learning \cite{Doersch2017}. 
Indeed, \citet{Wei2021} provide an analysis akin ours based on such an assumption.
As such, the practitioner might consider weights $\alpha$ given by similarity
metrics derived through such techniques, before computing the
disambiguation~\eqref{eq:disambiguation} and learning $f_n$ from the recollected
fully supervised dataset with deep learning. 

\begin{figure*}[t]
  \centering
  \vskip -0.1in
  \includegraphics{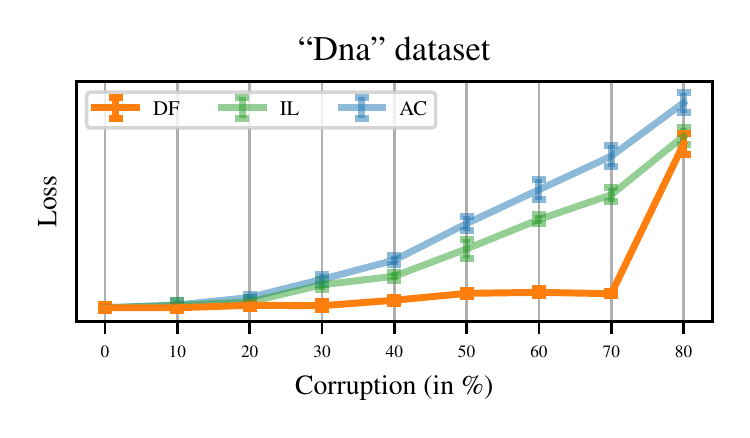}
  \includegraphics{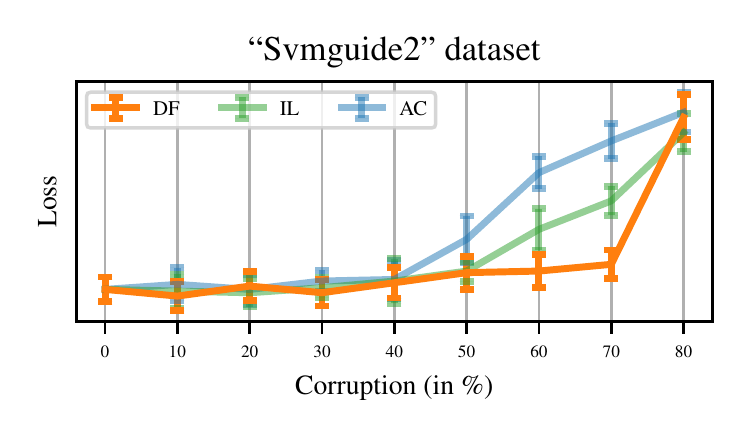}
  \vskip -0.2in
  \caption{Testing errors as function of the supervision corruption on real
    dataset corresponding to classification with partial labels.
    We split fully supervised LIBSVM datasets into training and testing dataset.
    We corrupt training data in order to get partial labels. Corruption is
    managed through a parameter, represented by the $x$-axis, that relates to
    the ambiguity degree $\eta$ of Assumption \ref{ass:non-ambiguity}. For each
    methods (our algorithm (DF), the baseline (IL), and the baseline of the
    baseline (AC, consisting of averaging candidates $y_i$ in sets $S_i$)), we consider
    weights $\alpha$ given by kernel ridge regression with 
    Gaussian kernel, for which we optimized hyperparameters with cross-validation
    on the training set. We then learn an estimate $f_n$ that we evaluate on the
    testing set, represented by the $y$-axis, on which we have full supervision.
    The figure show the superiority of our method, that achieves error similar to
    baseline when full supervision ($x=0$) or no supervision ($x=100\%$) is
    given, but performs better when only in presence of partial supervision.
    See Appendix \ref{app:experiments} for reproducibility specifications, where
    we also provide Figure \ref{fig:rk} showcasing similar empirical results in
    the case of ranking with partial ordering. 
  }
  \label{fig:cl}
  \vskip -0.1in
\end{figure*}

\section{Experiments}
\label{sec:example}

In this section, we review a baseline, and experiments that showcase the
usefulness of our algorithm
Eqs.~\eqref{eq:disambiguation}~and~\eqref{eq:estimate}.

\paragraph{Baseline.}
We consider as a baseline the work of \citet{Cabannes2020}, which is a
consistent structured prediction approach to partial labelling through the
infimum loss. It is arguably the state-of-the-art of partial labelling approached
through structured prediction.
It follow the same loss-based variance disambiguation principle,
yet in an implicit fashion, leading to the inference algorithm, $f_n:\X\to\Y$,
\begin{equation}
  \label{eq:baseline}
  f_n(x) \in \argmin_{z\in\Y} \inf_{(y_i) \in C_n} \sum_{i=1}^n \alpha_i(x) \ell(z, y_i).
\end{equation}
Note that with our proof technique, which overcome the sub-optimality of calibration inequality \citep{Audibert2007,Cabannes2021}, exponential convergence rates similar to Theorem
\ref{thm:convergence} could be derived for the baseline. 
Yet, as we will see, our algorithm outperforms this state-of-the-art baseline.
This could be explained by the fact that our algorithm introduce an intrinsically smaller surrogate space (in essence, \citet{Cabannes2020} introduced surrogate functions from inputs in $\X$ to powersets represented in $\R^{2^\Y}$, while we look at functions from input in $\X$ to output represented in $\R^\Y$).

\paragraph{Disambiguation coherence - Interval regression.}
The baseline Eq.~\eqref{eq:baseline} implicitly requires to disambiguate
$(\hat{y}_i(x))$ differently for every $x\in\X$. This is counter intuitive since
$(y_i^*)$ does not depend on $x$. It means that $(\hat{y}_i)$ could be equal to some
$(\hat{y}_i^{(0)})$ on a subset $\X_0$ of $\X$, and to another
$(\hat{y}_i^{(1)})$ on a disjoint subset $\X_1 \subset \X$, leading to
irregularity of $f_n$ between $\X_0$ and $\X_1$. We illustrate this graphically
on Figure \ref{fig:ir}. This figure showcases an interval regression problem,
which corresponds to the regression setup
($\Y = \R$, $\ell(y, z) = \abs{y - z}^2$) of partial labelling, where one does
not observed $y\in\R$ but an interval $s\subset \R$ containing $y$.
Among others, this problem appears in physics \citep{Sheppard1897} and economy \citep{Tobin1958}.

\paragraph{Computation attractiveness - Ranking.}
Computationally, the baseline requires to solve a disambiguation
problem, recovering $(\hat{y_i}(x)) \in C_n$ for every $x\in\X$ for which we
want to infer $f_n(x)$. 
This is much more costly, than doing the disambiguation of $(\hat{y_i}) \in C_n$
once, and solving the supervised learning inference problem
Eq.~\eqref{eq:estimate}, for every $x\in\X$ for which we want to infer $f_n(x)$.
To illustrate the computation attractiveness of our algorithm, consider the
case of ranking, defined in Section \ref{sec:ranking}.
Fully supervised inference scheme~\eqref{eq:estimate} corresponds to solving a
NP-hard problem, equivalent to the minimum feedback arcset problem
\citep{Duchi2010}.
While disambiguation approaches with alternative minimization implied by
Eq.~\eqref{eq:disambiguation} and Eq.~\eqref{eq:baseline} require to solve this
NP-hard problem for each minimization step.
In other terms, the baseline ask to solve multiple NP-hard problem every time
one wants to infer $f_n$ given by Eq.~\eqref{eq:baseline} on an input $x\in\X$.
Meanwhile, our disambiguation approach asks to solve multiple NP-hard problem
upfront to solve Eq.~\eqref{eq:disambiguation}, yet only require to solve one
NP-hard problem to infer $f_n$ given by Eq.~\eqref{eq:estimate} on an input $x\in\X$.

\paragraph{Better empirical results - Classification.}
Finally, we compare our algorithm, our baseline~\eqref{eq:baseline} and the
baseline considered by \citet{Cabannes2020} on real datasets from the LIBSVM dataset
\citep{Chang2011}. Those datasets $(x_i, y_i)$ correspond to fully supervised classification
problem. In this setup, $\Y = \bbracket{1,m}$ for $m$ a number of classes, and
$\ell(y, z) = \ind{y\neq z}$. We ``corrupt'' labels in order to create a synthetic
weak supervision datasets $(x_i, s_i)$. We consider skewed
corruption, in the sense that $(s_i)$ is generated by a probability such that
$\sum_{z\in\Y} \Pbb_{S_i}(z\in S_i \vert y_i)$ depends on the value of
$y_i$. This corruption is parametrized by a parameter that related with the
ambiguity parameter $\eta$ of Assumption \ref{ass:non-ambiguity}.
Results on Figure \ref{fig:cl} show that, in addition to having
a lower computation cost, our algorithm performs better in practice than the
state-of-the-art baseline.\footnote{All the code is available online - \url{https://github.com/VivienCabannes/partial_labelling}.}

\begin{figure*}[t]
  \centering
  \vskip -0.1in
  \includegraphics{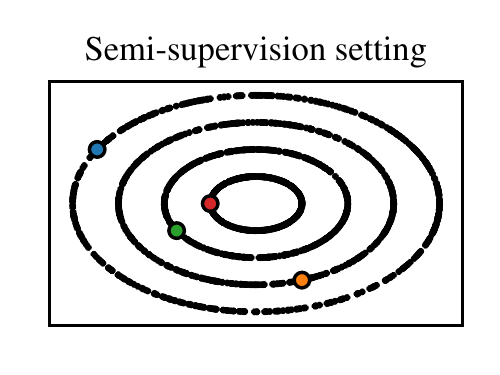}
  \includegraphics{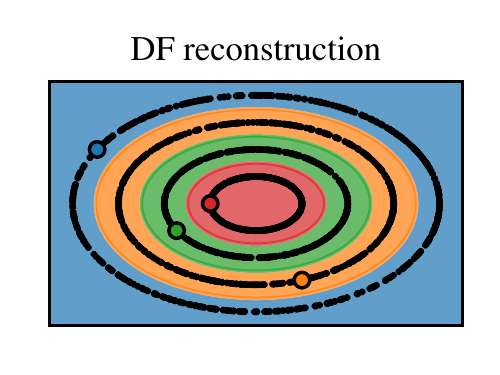}
  \includegraphics{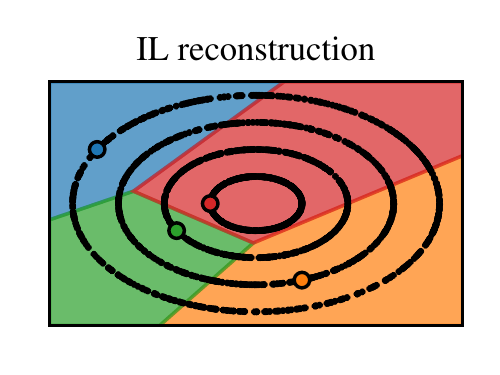}
  \vskip -0.2in
  \caption{Semi-supervised learning, ``concentric circle'' instance with four
    classes (red, green, blue, yellow). Reproducibility details provided in
    Appendix \ref{app:experiments}. (Left) We represent points $x_i \in \X
    \subset \R^2$, there is many unlabelled points (represented by black dots
    and corresponding to $S_i = \Y$), and one labelled point for each class
    (represented in color, corresponding to $S_i = \brace{y_i}$). (Middle)
    Reconstruction $f_n:\X \to \Y$ given by our algorithm
    Eqs.~\eqref{eq:disambiguation} and \eqref{eq:estimate}. Our algorithm
    succeeds to comprehend the concentric circle structure of the input
    distribution and clusters classes accordingly. (Right) Reconstruction
    $f_n:\X\to\Y$ given by the baseline Eq.~\eqref{eq:baseline}. The baseline
    performs as if only the four supervised data points where given.} 
  \label{fig:ss}
  \vskip -0.1in
\end{figure*}

\paragraph{Beyond Eq.~\eqref{eq:principle} - Semi-supervised learning.} 
The main limitation of Eq.~\eqref{eq:principle} is that it is a pointwise principle
that decorrelates inputs, in the sense that the optimization of $\mu^*\vert_x$,
for $x\in\X$, only depends on $\nu\vert_x$ and not on what is happening on
$\X\backslash \brace{x}$. As such, this principle failed to tackle
semi-supervised learning, where $\nu\vert_x$ is equal to $\mu\vert_x$ (in the
sense that $\pi\vert_{x, y} =\delta_{\brace{y}}$) for $x\in\X_l$ and is equal to
$\delta_\Y$ for $x \in  \X_u:=\X\backslash\X_l$. In such a setting, for
$x\in\X_u$, $\mu^*\vert_x$ can be set to any $\delta_y$ for $y\in\Y$. 
Interestingly, in practice, while the baseline suffer the same limitation,
for our algorithm, {\em weighting schemes have a regularization effect}, that
contrasts with those considerations. We illustrate it on Figure \ref{fig:ss}. 

\paragraph{Real real-world applications.}
There is a real lack of clean datasets to experiment with partial labelling. 
Most theoretical papers consist in synthetic corruption of fully supervised dataset \citep[\emph{e.g.,}][]{Korba2018} as we did. 
Empirical papers are built on highly complex datasets that require skilled pre-processing and tricks beside theoretically-grounded principle ({\em e.g.}, action recognition on Youtube videos). 
However, note that the state-of-the-art work of \citet{Miech2019} is built on heuristics from the Diffrac algorithm, which we generalized \citep[see][for details]{Alayrac2016}. 
We hope that, by providing theoretical understanding of the problem, our paper could help to design powerful heuristics in practice, even though this is out of scope of the present paper.

\section{Conclusion}
\label{sec:opening}

In this work, we have introduced a structured prediction algorithm
Eqs.~\eqref{eq:disambiguation} and \eqref{eq:estimate}, to tackle partial
labelling. We have derived exponential convergence rates for the nearest
neighbors instance of this algorithm under classical learnability assumptions.
We provided optimization considerations to implement this algorithm in practice,
and have successfully compared it with the state-of-the-art.
Several open problems offer prospective follow-up of this works.
\begin{itemize}
    \item 
\emph{Semi-supervised learning and beyond.}
While we only proved convergence in situation where $\mu^*$ of
Eq.~\eqref{eq:principle} is uniquely defined, therefore excluding
semi-supervised learning, Figure \ref{fig:ss} suggests that our algorithm
\eqref{eq:disambiguation} could be analyzed in a broader setting than the one
considered in this paper. 
Among others, the non-ambiguity assumption could
be replaced by a cluster assumption \citep{Rigollet2007} together with a
non-ambiguity assumption cluster-wise in Theorem~\ref{thm:convergence}. 
\item
\emph{Hard-coded weak supervision.}
Variational principles Eqs.~\eqref{eq:principle} and
\eqref{eq:solution} could be extended beyond partial labelling to any type of
hard-coded weak supervision, which is when weak supervision can be cast as a
set of hard constraint that $\mu$ should satisfy, formally written as a set of
fully supervised distributions compatible with weak information. Hard-coded weak
supervision includes label proportion \citep{Quadrianto2009,Dulac2019}, but
excludes supervision of the type
``80\% of the experts say this nose is broken, and 20\% say it is not''.
Providing a unifying framework for those problems would make an
important step in the theoretical foundation of weakly supervised learning.
\item
\emph{Missing input data.}
While weak supervision assumes that only $y$ is partially known, in many
applications of machine learning, $x$ is also only partially known, especially
when the feature vector $x$ is built from various source of information, leading
to missing data. While we only considered a principle to fill missing output
information, similar principles could be formalized to fill missing input information. 
This would be particularly valuable when data are not missing at random \citep{Rubin1976,Muzellec2020}.

\end{itemize}


\section*{Acknowledgements}
We would like to thanks anonymous referees for helpful comments.
This work was funded in part by the French government
under management of Agence Nationale de la Recherche as part of the
``Investissements d'avenir'' program, reference ANR-19-P3IA-0001 (PRAIRIE 3IA
Institute). We also acknowledge support of the European Research Council
(grants SEQUOIA 724063, REAL 94790). 

\bibliography{main}

\begin{thebibliography}{56}
\providecommand{\natexlab}[1]{#1}
\providecommand{\url}[1]{\texttt{#1}}
\expandafter\ifx\csname urlstyle\endcsname\relax
  \providecommand{\doi}[1]{doi: #1}\else
  \providecommand{\doi}{doi: \begingroup \urlstyle{rm}\Url}\fi

\bibitem[Alayrac et~al.(2016)Alayrac, Bojanowski, Agrawal, Sivic, Laptev, and
  Lacoste{-}Julien]{Alayrac2016}
Alayrac, J., Bojanowski, P., Agrawal, N., Sivic, J., Laptev, I., and
  Lacoste{-}Julien, S.
\newblock Unsupervised learning from narrated instruction videos.
\newblock In \emph{Conference on Computer Vision and Pattern Recognition},
  2016.

\bibitem[Audibert \& Tsybakov(2007)Audibert and Tsybakov]{Audibert2007}
Audibert, J.-Y. and Tsybakov, A.
\newblock Fast learning rates for plug-in classifiers.
\newblock \emph{Annals of Statistics}, 2007.

\bibitem[Bach \& Harchaoui(2007)Bach and Harchaoui]{Bach2007}
Bach, F. and Harchaoui, Z.
\newblock {DIFFRAC:} a discriminative and flexible framework for clustering.
\newblock In \emph{Neural Information Processing Systems 20}, 2007.

\bibitem[Bengio et~al.(2006)Bengio, Delalleau, and Roux]{Bengio2006}
Bengio, Y., Delalleau, O., and Roux, N.~L.
\newblock Label propagation and quadratic criterion.
\newblock In \emph{Semi-Supervised Learning}. {MIT} Press, 2006.

\bibitem[Berthelot et~al.(2019)Berthelot, Carlini, Goodfellow, Papernot,
  Oliver, and Raffel]{Berthelot2019}
Berthelot, D., Carlini, N., Goodfellow, I.~J., Papernot, N., Oliver, A., and
  Raffel, C.
\newblock Mixmatch: {A} holistic approach to semi-supervised learning.
\newblock In \emph{Neural Information Processing Systems}, 2019.

\bibitem[Biau \& Devroye(2015)Biau and Devroye]{Biau2015}
Biau, G. and Devroye, L.
\newblock \emph{Lectures on the Nearest Neighbor Method}.
\newblock Springer International Publishing, 2015.

\bibitem[Boucheron et~al.(2005)Boucheron, Bousquet, and Lugosi]{Boucheron2005}
Boucheron, S., Bousquet, O., and Lugosi, G.
\newblock Theory of classification: a survey of some recent advances.
\newblock \emph{ESAIM: Probability and Statistics}, 2005.

\bibitem[Cabannes et~al.(2020)Cabannes, Rudi, and Bach]{Cabannes2020}
Cabannes, V., Rudi, A., and Bach, F.
\newblock Structured prediction with partial labelling through the infimum
  loss.
\newblock In \emph{International Conference on Machine Learning}, 2020.

\bibitem[Cabannes et~al.(2021)Cabannes, Rudi, and Bach]{Cabannes2021}
Cabannes, V., Rudi, A., and Bach, F.
\newblock Fast rates in structured prediction.
\newblock In \emph{Conference on Learning Theory}, 2021.

\bibitem[Cao et~al.(2007)Cao, Qin, Liu, Tsai, and Li]{Cao2007}
Cao, Z., Qin, T., Liu, T.-Y., Tsai, M.-F., and Li, H.
\newblock Learning to rank: from pairwise approach to listwise approach.
\newblock In \emph{International Conference of Machine Learning}, 2007.

\bibitem[Chang \& Lin(2011)Chang and Lin]{Chang2011}
Chang, C. and Lin, C.
\newblock {LIBSVM:} {A} library for support vector machines.
\newblock \emph{{ACM} {TIST}}, 2011.

\bibitem[Cid-Sueiro et~al.(2014)Cid-Sueiro, García-García, and
  Santos-Rodríguez]{CidSueiro2014}
Cid-Sueiro, J., García-García, D., and Santos-Rodríguez, R.
\newblock Consistency of losses for learning from weak labels.
\newblock \emph{Lecture Notes in Computer Science}, 2014.

\bibitem[Ciliberto et~al.(2016)Ciliberto, Rosasco, and Rudi]{Ciliberto2016}
Ciliberto, C., Rosasco, L., and Rudi, A.
\newblock A consistent regularization approach for structured prediction.
\newblock In \emph{Neural Information Processing Systems 29}, 2016.

\bibitem[Ciliberto et~al.(2020)Ciliberto, Rosasco, and Rudi]{Ciliberto2020}
Ciliberto, C., Rosasco, L., and Rudi, A.
\newblock A general framework for consistent structured prediction with
  implicit loss embeddings.
\newblock \emph{Journal of Machine Learning Research}, 2020.

\bibitem[Cour et~al.(2011)Cour, Sapp, and Taskar]{Cour2011}
Cour, T., Sapp, B., and Taskar, B.
\newblock Learning from partial labels.
\newblock \emph{Journal of Machine Learning Research}, 2011.

\bibitem[Dabral et~al.(2018)Dabral, Mundhada, Kusupati, Afaque, Sharma, and
  Jain]{Dabral2018}
Dabral, R., Mundhada, A., Kusupati, U., Afaque, S., Sharma, A., and Jain, A.
\newblock Learning {3D} human pose from structure and motion.
\newblock In \emph{European Conference on Computer Vision}, 2018.

\bibitem[Dempster et~al.(1977)Dempster, Laird, and Rubin]{Dempster1977}
Dempster, A., Laird, N., and Rubin, D.
\newblock Maximum likelihood from incomplete data via the {EM} algorithm.
\newblock \emph{Journal of the Royal Statistical Society}, 1977.

\bibitem[Denoeux(2013)]{Denoeux2013}
Denoeux, T.
\newblock Maximum likelihood estimation from uncertain data in the belief
  function framework.
\newblock \emph{IEEE Transactions on Knowledge and Data Engineerin}, 2013.

\bibitem[Doersch \& Zisserman(2017)Doersch and Zisserman]{Doersch2017}
Doersch, C. and Zisserman, A.
\newblock Multi-task self-supervised visual learning.
\newblock In \emph{International Conference on Computer Vision}, 2017.

\bibitem[Duchi et~al.(2010)Duchi, Mackey, and Jordan]{Duchi2010}
Duchi, J.~C., Mackey, L.~W., and Jordan, M.~I.
\newblock On the consistency of ranking algorithms.
\newblock In \emph{International Conference on Machine Learning}, 2010.

\bibitem[Duda et~al.(2000)Duda, Hart, and Stork]{Duda2000}
Duda, R., Hart, P., and Stork, D.
\newblock \emph{Pattern Classification, 2nd Edition}.
\newblock Wiley, 2000.

\bibitem[Dulac-Arnold et~al.(2019)Dulac-Arnold, Zeghidour, Cuturi, Beyer, and
  Vert]{Dulac2019}
Dulac-Arnold, G., Zeghidour, N., Cuturi, M., Beyer, L., and Vert, J.-P.
\newblock Deep multiclass learning from label proportions.
\newblock In \emph{ArXiv}, 2019.

\bibitem[Gong et~al.(2018)Gong, Liu, Tang, Yang, Yang, and Tao]{Gong2018}
Gong, C., Liu, T., Tang, Y., Yang, J., Yang, J., and Tao, D.
\newblock A regularization approach for instance-based superset label learning.
\newblock \emph{IEEE Transactions on Cybernetics}, 2018.

\bibitem[Harris et~al.(2020)Harris, Millman, van~der Walt, et~al.]{numpy}
Harris, C., Millman, J., van~der Walt, S., et~al.
\newblock Array programming with {NumPy}.
\newblock \emph{Nature}, 2020.

\bibitem[Hein et~al.(2007)Hein, Audibert, and von Luxburg]{Hein2007}
Hein, M., Audibert, J.-Y., and von Luxburg, U.
\newblock Graph {L}aplacians and their convergence on random neighborhood
  graphs.
\newblock \emph{Journal of Machine Learning Research}, 2007.

\bibitem[Heitjan \& Rubin(1991)Heitjan and Rubin]{Heitjan1991}
Heitjan, D. and Rubin, D.
\newblock Ignorability and coarse data.
\newblock \emph{The Annals of Statistics}, 1991.

\bibitem[H{\"{u}}llermeier(2014)]{Hullermeier2014}
H{\"{u}}llermeier, E.
\newblock Learning from imprecise and fuzzy observations: Data disambiguation
  through generalized loss minimization.
\newblock \emph{International Journal of Approximate Reasoning}, 2014.

\bibitem[H{\"{u}}llermeier et~al.(2008)H{\"{u}}llermeier, F{\"{u}}rnkranz,
  Cheng, and Brinker]{Hullermeier2008}
H{\"{u}}llermeier, E., F{\"{u}}rnkranz, J., Cheng, W., and Brinker, K.
\newblock Label ranking by learning pairwise preferences.
\newblock \emph{Artificial Intelligence}, 2008.

\bibitem[IBM(2017)]{Cplex}
IBM.
\newblock \emph{{IBM ILOG CPLEX} 12.7 User’s Manual}.
\newblock IBM ILOG CPLEX Division, 2017.

\bibitem[Jia et~al.(2018)Jia, Zhang, Weiss, Wang, Shen, Ren, Chen, Nguyen,
  Pang, Lopez{-}Moreno, and Wu]{Jia2018}
Jia, Y., Zhang, Y., Weiss, R., Wang, Q., Shen, J., Ren, F., Chen, Z., Nguyen,
  P., Pang, R., Lopez{-}Moreno, I., and Wu, Y.
\newblock Transfer learning from speaker verification to multispeaker
  text-to-speech synthesis.
\newblock In \emph{Neural Information Processing Systems}, 2018.

\bibitem[Jin \& Ghahramani(2002)Jin and Ghahramani]{Jin2002}
Jin, R. and Ghahramani, Z.
\newblock Learning with multiple labels.
\newblock In \emph{Neural Information Processing Systems}, 2002.

\bibitem[Joulin et~al.(2010)Joulin, Bach, and Ponce]{Joulin2010}
Joulin, A., Bach, F., and Ponce, J.
\newblock Discriminative clustering for image co-segmentation.
\newblock In \emph{Conference on Computer Vision and Pattern Recognition},
  2010.

\bibitem[Kendall(1938)]{Kendall1938}
Kendall, M.
\newblock A new measure of rank correlation.
\newblock \emph{Biometrika}, 1938.

\bibitem[Korba et~al.(2018)Korba, Garcia, and d'Alch{\'{e}}{-}Buc]{Korba2018}
Korba, A., Garcia, A., and d'Alch{\'{e}}{-}Buc, F.
\newblock A structured prediction approach for label ranking.
\newblock In \emph{Neural Information Processing Systems}, 2018.

\bibitem[Lienen \& H{\"u}llermeier(2021)Lienen and H{\"u}llermeier]{Lienen2021}
Lienen, J. and H{\"u}llermeier, E.
\newblock From label smoothing to label relaxation.
\newblock In \emph{AAAI Conference on Artificial Intelligence}, 2021.

\bibitem[Liu \& Dietterich(2014)Liu and Dietterich]{Liu2014}
Liu, L.-P. and Dietterich, T.
\newblock Learnability of the superset label learning problem.
\newblock In \emph{International Conference on Machine Learning}, 2014.

\bibitem[Luo \& Orabona(2010)Luo and Orabona]{Luo2010}
Luo, J. and Orabona, F.
\newblock Learning from candidate labeling sets.
\newblock In \emph{Neural Information Processing Systems}, 2010.

\bibitem[Miech et~al.(2019)Miech, Zhukov, Alayrac, Tapaswi, Laptev, and
  Sivic]{Miech2019}
Miech, A., Zhukov, D., Alayrac, J.-B., Tapaswi, M., Laptev, I., and Sivic, J.
\newblock Howto100m: Learning a text-video embedding by watching hundred
  million narrated video clips.
\newblock In \emph{International Conference on Computer Vision}, 2019.

\bibitem[Muzellec et~al.(2020)Muzellec, Josse, Boyer, and Cuturi]{Muzellec2020}
Muzellec, B., Josse, J., Boyer, C., and Cuturi, M.
\newblock Missing data imputation using optimal transport.
\newblock In \emph{International Conference of Machine Learning}, 2020.

\bibitem[Nowak-Vila et~al.(2019)Nowak-Vila, Bach, and Rudi]{Nowak2019}
Nowak-Vila, A., Bach, F., and Rudi, A.
\newblock Sharp analysis of learning with discrete losses.
\newblock In \emph{Artificial Intelligence and Statistics}, 2019.

\bibitem[Papandreou et~al.(2015)Papandreou, Chen, Murphy, and
  Yuille]{Papandreou2015}
Papandreou, G., Chen, L.-C., Murphy, K., and Yuille, A.
\newblock Weakly- and semi-supervised learning of a deep convolutional network
  for semantic image segmentation.
\newblock In \emph{International Conference on Computer Vision}, 2015.

\bibitem[Perchet \& Quincampoix(2015)Perchet and Quincampoix]{Perchet2015}
Perchet, V. and Quincampoix, M.
\newblock On a unified framework for approachability with full or partial
  monitoring.
\newblock \emph{Mathematics of Operations Research}, 2015.

\bibitem[Quadrianto et~al.(2009)Quadrianto, Smola, Caetano, and
  Le]{Quadrianto2009}
Quadrianto, N., Smola, A., Caetano, T., and Le, Q.~V.
\newblock Estimating labels from label proportions.
\newblock \emph{Journal of Machine Learning Research}, 2009.

\bibitem[Rigollet(2007)]{Rigollet2007}
Rigollet, P.
\newblock Generalization error bounds in semi-supervised classification under
  the cluster assumption.
\newblock \emph{Journal of Machine Learning Research}, 2007.

\bibitem[Rubin(1976)]{Rubin1976}
Rubin, D.
\newblock Inference and missing data.
\newblock \emph{Biometrika}, 1976.

\bibitem[Sheppard(1897)]{Sheppard1897}
Sheppard, W.
\newblock On the calculation of the most probable values of frequency
  constants, for data arranged according to equidistant division of a scale.
\newblock \emph{Proceedings of the London Mathematical Society}, 1897.

\bibitem[Steinwart \& Christmann(2008)Steinwart and Christmann]{Steinwart2008}
Steinwart, I. and Christmann, A.
\newblock \emph{Support vector machines}.
\newblock Springer Science \& Business Media, 2008.

\bibitem[Stone(1977)]{Stone1977}
Stone, C.
\newblock Consistent nonparametric regression.
\newblock \emph{The Annals of Statistics}, 1977.

\bibitem[Tobin(1958)]{Tobin1958}
Tobin, J.
\newblock Estimation of relationships for limited dependent variables.
\newblock \emph{Econometrica}, 1958.

\bibitem[van Rooyen \& Williamson(2017)van Rooyen and
  Williamson]{vanRooyen2018}
van Rooyen, B. and Williamson, R.~C.
\newblock A theory of learning with corrupted labels.
\newblock \emph{Journal of Machine Learning Research}, 2017.

\bibitem[Vapnik(1995)]{Vapnik1995}
Vapnik, V.
\newblock \emph{The Nature of Statistical Learning Theory}.
\newblock Springer-Verlag, 1995.

\bibitem[Wang et~al.(2017)Wang, Lu, Wang, Feng, Wang, Yin, and Ruan]{Wang2017}
Wang, L., Lu, H., Wang, Y., Feng, M., Wang, D., Yin, B., and Ruan, X.
\newblock Learning to detect salient objects with image-level supervision.
\newblock In \emph{Conference on Computer Vision and Pattern Recognition},
  2017.

\bibitem[Wei et~al.(2021)Wei, Shen, Chen, and Ma]{Wei2021}
Wei, C., Shen, K., Chen, Y., and Ma, T.
\newblock Theoretical analysis of self-training with deep networks on unlabeled
  data.
\newblock In \emph{International Conference on Learning Representations}, 2021.

\bibitem[Xu et~al.(2004)Xu, Neufeld, Larson, and Schuurmans]{Xu2004}
Xu, L., Neufeld, J., Larson, B., and Schuurmans, D.
\newblock Maximum margin clustering.
\newblock \emph{Neural Information Processing Systems}, 2004.

\bibitem[Zhou et~al.(2003)Zhou, Bousquet, Lal, Weston, and
  Sch{\"{o}}lkopf]{Zhou2003}
Zhou, D., Bousquet, O., Lal, T.~N., Weston, J., and Sch{\"{o}}lkopf, B.
\newblock Learning with local and global consistency.
\newblock In \emph{Neural Information Processing Systems}, 2003.

\bibitem[Zhu et~al.(2003)Zhu, Ghahramani, and Lafferty]{Zhu2003}
Zhu, X., Ghahramani, Z., and Lafferty, J.~D.
\newblock Semi-supervised learning using {G}aussian fields and harmonic
  functions.
\newblock In \emph{International Conference of Machine Learning}, 2003.

\end{thebibliography}
\bibliographystyle{style/icml2021}

\onecolumn
\appendix

\section{Proofs}

\paragraph{Mathematical assumptions.}
To make formal what should be seen as implicit assumptions heretofore,
we consider $\X$ and $\Y$ Polish spaces, $\Y$ compact,
$\ell:\Y\times\Y\to\R$ continuous, ${\cal H}$ a separable Hilbert space, $\phi$
measurable, and $\psi$ continuous.
We also assume that for $\nu_x$-almost every $x\in\X$, and any $\mu\vdash\nu$,
that the pushforward measure $\phi_*\mu\vert_x$ has a second moment.
This is the sufficient setup in order to be able to define formally objects and
solutions considered all along the paper.

\paragraph{Notations.}
Beside standard notations, we use $\card{\Y}$ to design the cardinality of $\Y$,
and $2^\Y$ to design the set of subsets of $\Y$.
Regarding measures, we use $\mu_\X$ and $\mu\vert_x$ respectively the marginal
over $\X$ and the conditional accordingly to $x$ of $\mu\in\prob{\X\times\Y}$.
We denote by $\mu^{\otimes n}$ the distribution of the random variable $(Z_1, \cdots, Z_n)$,
where the $Z_i$ are sampled independently according to $\mu$.
For $A$ a Polish space, we consider $\prob{A}$ the set of Borel
probability measures on this space.
For $\phi:\Y\to{\cal H}$ and $S\subset \Y$, we denote by $\phi(S)$ the set $\brace{\phi(y)\midvert y\in S}$.
For a family of sets $(S_i)$, we denote by $\prod S_i$ the Cartesian product $S_1\times S_2\times\cdots$,
also defined as the set of points $(y_i)$ such that $y_i \in S_i$ for all index $i$,
and by $\Y^n$ the Cartesian product $\prod_{i\leq n}\Y$. 
Finally, for $E$ a subset of a vector space $E'$, $\hull E$ denotes the convex
hull of $E$ and $\Span(E)$ its span.

\paragraph{Abuse of notations.}
For readability sake, we have abused notations.
For a signed measure $\mu$, we denote by $\E_{\mu}[X]$ the integral $\int
x\diff\mu(x)$, extending this notation usually reserved to probability measure.
More importantly, when considering $2^\Y$, we should actually restrict ourselves
to the subspace ${\cal S} \subset 2^\Y$ of closed subsets of $\Y$, as ${\cal S}$
is a Polish space (metrizable by the Hausdorff distance) while $2^\Y$ is not always.
However, when $\Y$ is finite, those two spaces are equals, $2^\Y = {\cal S}$.

\subsection{Proof of Lemma \ref{lem:cal}}
\label{proof:cal}
 
  From Lemma 3 in \citet{Cabannes2021}, we pulled the calibration inequality
  \[
    {\cal R}(f_n) - {\cal R}(f^*) \leq 2c_\psi
    \E\bracket{\ind{\norm{g_n(X) - g^*(X)} > d(g^*(X), F)}\norm{g_n(X) - g^*(X)}}.
  \]
  Where $F$ is defined as the set of points $\xi \in \hull\phi(\Y)$ leading to
  two decodings
  \[
    F = \brace{\xi \in \hull\phi(\Y)\midvert
      \card{\argmin_{z\in\Y}\scap{\psi(z)}{\xi}} > 1},
  \]
  and $d$ is defined as the extension of the norm distance to sets, for $\xi \in
  {\cal H}$
  \[
    d(\xi, F) = \inf_{\xi'\in F} \norm{\xi - \xi'}_{\cal H}.
  \]
  Using that $\norm{g_n(X) - g^*(X)} \leq \norm{g_n(X) - g_n^*(X)} +
  \norm{g_n^*(X) - g^*(X)}$ and that, if $a \leq b + c$,
  \[
    \ind{a > \delta} a \leq \ind{b + c > \delta} b + c \leq \ind{2\sup(b, c) >
      \delta} 2 \sup{b, c} = 2 \sup_{e\in{b, c}} \ind{e > \delta} e
    \leq 2 \ind{b>\delta} b + 2\ind{c > \delta} c.
  \]
  We get the refined inequality
  \[
    {\cal R}(f_n) - {\cal R}(g^*) \leq 4 c_\psi
    \E\bracket{\ind{2\norm{g_n(X) - g_n^*(X)} >  d(g^*(X), F)}\norm{g_n(X) -
        g_n^*(X)}
    + \ind{2\norm{g_n^*(X) - g^*(X)} >  d(g^*(X), F)}\norm{g_n^*(X) - g^*(X)}}.
  \]
  The first term is bounded with
  \[
    \E\bracket{\ind{2\norm{g_n(X) - g_n^*(X)} >  d(g^*(X), F)}\norm{g_n(X) -
        g_n^*(X)}} \leq \norm{g_n - g_n^*}_{L^1}.
  \]
  While for the second term, we proceed with
  \[
    \E\bracket{\ind{2\norm{g_n^*(X) - g^*(X)} >  d(g^*(X), F)}\norm{g_n^*(X)
        -g^*(X)}}
    \leq \norm{g_n^* - g^*}_{L^\infty} \Pbb_X\paren{2\norm{g_n^*(X) - g^*(X)} >
      \inf_{x\in\supp\nu_\X} d(g^*(X), F)}.
  \]
  When weights sum to one, that is $\sum_{i=1}^n\alpha_i(X) = 1$, both
  $g_n^*(X)$ and $g^*(X)$ are averaging of $\phi(y)$ for $y\in\Y$, therefore
  \[
    \norm{g_n^* - g^*}_{L^\infty} \leq 2 c_\phi.
  \]
  Finally, when the labels are a deterministic function of the input, $g^*(X) =
  \phi(f^*(X))$, and $d(g^*(X), F) \leq \sup_{y\in\Y} d(\phi(y), F)$.
  Defining $\delta := \sup_{y\in\Y} d(\phi(y), F) /2$, and adding everything
  together leads to Lemma~\ref{lem:cal}.

\subsection{Implication of Assumptions \ref{ass:non-ambiguity} and
  \ref{ass:lipschitz}}
\label{proof:ass}

  Assume that Assumption \ref{ass:non-ambiguity} holds, consider
  $x\in\supp\nu_\X$, let us show that $f^*(x) = y_x$ and 
  $\mu^*\vert_x = \delta_{y_x}$. First of all, notice that 
  \(\bigcap_{S;S\in\supp\nu\vert_x} = \brace{y_x} \);
  that $\delta_{y_x} \vdash \nu\vert_x$, as it corresponds to
  $\pi\vert_{x, S} = \delta_{y_x} \in \prob{S}$, for all $S$ in the support of
  $\nu\vert_{x}$; and that, because $\ell$ is well-behaved,
  \[
    \inf_{z\in\Y} \ell(z, y_x) = \ell(y_x, y_x) = 0.
  \]
  This infimum is only achieved for $z = y_x$, hence if we prove that
  $\mu^*\vert_x = \delta_{y_x}$, we directly have that $f^*(x) = y_x$.
  Finally, suppose that $\mu\vert_x\vdash\nu\vert_x$ charges $y \neq y_x$. Because
  $y$ does not belong to all sets charged by $\nu\vert_x$, $\mu\vert_x$ should
  charge an other $y'\in\Y$, and therefore
  \[
    \inf_{z\in\Y} \E_{Y\sim\mu\vert_x}[\ell(z, y)] \geq
    \inf_{z\in\Y}\mu\vert_x(y) \ell(z, y) + 
    \mu\vert_x(y') \ell(z, y') > 0.
  \]
  Which shows that $\mu^*\vert_x = \delta_{y_x}$. 
  We deduce that $g^*(x) = y_x$.

  Now suppose that Assumption \ref{ass:lipschitz} holds too, and consider two 
  $x, x' \in\supp\nu_\X$ belonging to two different classes 
  $f(x) = y$ and $f(x') = y'$. We have that $g^*(x) = \phi(y)$ and $g^*(x') =
  \phi(y')$, therefore,
  \[
    d(x, x') \geq c^{-1} \norm{\phi(y) - \phi(y')}_{\cal H}.
  \]
  Define $h_2 = \inf_{y\neq y'} c^{-1} \norm{\phi(y) - \phi(y')}_{\cal H}$. Let us
  now show that $h_2 > 0$. When $\Y$ is finite, this infimum is a minimum,
  therefore, $h_2 = 0$, only if there exists a $y \neq y'$, such that 
  $\phi(y) = \phi(y')$, which would implies that 
  $\ell(\cdot, y) = \ell(\cdot,y')$ and therefore $\ell(y, y') = \ell(y, y)$ which
  is impossible when $\ell$ is proper. 

\subsection{Proof of Theorem \ref{thm:convergence}}
\label{proof:convergence}

  Reusing Lemma \ref{lem:cal}, we have
  \[
    {\cal E}(f_n) \leq 4c_\psi \E_{{\cal D}_n, X}\bracket{\norm{g_n^*(X) - g_n(X)}_{\cal H}} +
    8c_\psi c_\phi \E_{{\cal D}_n, X}\bracket{\ind{\norm{g_n^*(X) - g^*(X)} > \delta}}. 
  \]
  We will first prove that 
  \[
    \E_{{\cal D}_n}\bracket{\ind{\norm{g_n^*(X) - g^*(X)} > \delta}}
    \leq \exp\paren{-\frac{np}{8}}
  \]
  as long as $k < np / 2$.
  The error between $g^*$ and $g_n$ relates to classical supervised learning of
  $g^*$ from samples $(X_i, Y_i) \sim \mu^*$. We invite the
  reader who would like more insights on this fully supervised part of the proof
  to refer to the several monographs written on local averaging methods and, in
  particular, nearest neighbors, such as \citet{Biau2015}. 
  Because of class separation, we
  know that, if $k$ points fall at distance at most $h$ of $x \in \supp\nu_\X$,
  $g_n^*(x) = k^{-1}\sum_{i; X_i\in{\cal N}(x)} \phi(y_i) = \phi(y_x) = g^*(x)$, where
  ${\cal N}(x)$ designs the $k$-nearest neighbors of $x$ in $(X_i)$. Because the
  probability of falling at distance $h$ of $x$ for each $X_i$ is lower bounded
  by~$p$, we have that
  \[
    \Pbb_{{\cal D}_n}(g_n^*(x) \neq g^*(x)) \leq \Pbb(\text{Bernouilli}(n, p) < k). 
  \]
  This can be upper bound by $\exp(- np / 8)$ as soon as $k < np/2$, based on
  Chernoff multiplicative bound \citep[see][for a reference]{Biau2015}, meaning
  \[
    \E_{{\cal D}_n, X} \bracket{\ind{\norm{g_n^*(X) - g^*(X)} \geq \delta}}
    \leq \exp(- np / 8).
  \]
  
  For the disambiguation part in $\norm{g_n - g_n^*}_{L^1}$, we distinguish two
  types of datasets, the ones where for any input $X_i$ its $k$-neighbors at are
  distance at least $h$, ensuring that disambiguation can be done by clusters, and
  datasets that does not verify this property.
  Consider the event
  \[
    \mathbb{D} = \brace{(X_i)_{i\leq n} \midvert \sup_{i} d(X_i, X_{(k)}(X_i)) < h}
  \]
  where $X_{(k)}(x)$ design the $k$-th nearest neighbor of $x$ in
  $(X_i)_{i\leq n}$. We proceed with
  \[
    \E_{{\cal D}_n, X}\bracket{\norm{g_n^*(X) - g_n(X)}_{\cal H}}
    \leq \sup_{X\in\X} \norm{g_n^* - g_n}_{\infty} \Pbb_{{\cal
        D}_n}((X_i) \notin\mathbb{D})
    + \E_{{\cal D}_n, X}\bracket{\norm{g_n^*(X) - g_n(X)}_{\cal H}\midvert (X_i) \in \mathbb{D}},
  \]
  Which is based on $E[Z] = \Pbb(Z\in A)\E[Z\vert A] + \Pbb(Z\notin A)\E[Z\vert ^cA]$.
  For the term corresponding to bad datasets, we can bound the
  disambiguation error with the maximum error.
  Similarly to the derivation for Lemma
  \ref{lem:cal}, because $g_n^*(x)$ and $g_n^*(X)$, are averaging of $\phi(y)$, we have
  that
  \[
    \sup_{x\in\supp\nu_\X} \norm{g_n(x) - g_n^*(x)} \leq 2c_\phi.
  \]
  Indeed, we allow ourselves to pay the worst error on those datasets as their
  probability is really small, which can be proved based on the following derivation.
  \begin{align*}
    \Pbb_{{\cal D}_n}((X_i)_{i\leq n} \notin\mathbb{D})
    &= \Pbb_{(X_i)}( \sup_{i} d(X_i, X_{(k)}(X_i)) \geq h)
    = \Pbb_{(X_i)}\paren{\cup_{i\leq n}\brace{d(X_i, X_{(k)}(X_i)) \geq h}}
    \\&\leq \sum_{i=1}^n \Pbb_{(X_i)}\paren{d(X_i, X_{(k)}(X_i)) \geq h}
    = n \Pbb_{X, {\cal D}_{n-1}}\paren{d(X, X_{(k)}(X)) \geq h}.
  \end{align*}
  This last probability has already been work out when dealing with the fully
  supervised part, and was bounded as
  \[
    \Pbb_{X, {\cal D}_{n-1}}\paren{d(X, X_{(k)}(X)) \geq h} \leq
    \exp\paren{-(n-1)p / 8}.
  \]
  as long as $k < (n-1)p / 2$. 
  Finally we have
  \[
    \sup_{X\in\X} \norm{g_n^* - g_n}_{\infty}
    \Pbb_{{\cal D}_n}((X_i)_{i\leq n} \notin\mathbb{D})
    \leq 2c_\phi n \exp\paren{-(n-1)p / 8}.
  \]
  
  For the expectation term, corresponding to datasets, ${\cal D}_n \in
  \mathbb{D}$, that cluster data accordingly to classes, we have to make
  sure that $\hat{y}_i = y_i^*$ is the only acceptable solution of
  Eq.~\eqref{eq:disambiguation}, which is true 
  as soon as the intersection of $S_j$, for $x_j$ the neighbors of $x_i$, only
  contained $y_i^*$.
  To work out the disambiguation algorithm, notice that 
  \begin{align*}
    \norm{g_n - g_n^*}_{L^1}
    &=\int_\X \norm{\sum_{i=1}^n \alpha_i(x) \phi(\hat{y}_i) - \phi(y^*_i)} \diff\nu_\X(x)
    \leq \int_\X k^{-1}\sum_{i=1}^n \ind{X_i \in{\cal N}(x)} \norm{\phi(\hat{y}_i) - \phi(y^*_i)} \diff\nu_\X(x)
    \\ &= k^{-1}\sum_{i=1}^n \Pbb_X\paren{X_i \in{\cal N}(X)} \norm{\phi(\hat{y}_i) - \phi(y^*_i)}
    \leq 2c_\phi k^{-1}\sum_{i=1}^n \Pbb_X\paren{X_i \in{\cal N}(X)} \ind{\phi(\hat{y}_i) \neq \phi(y^*_i)}.
  \end{align*}
  Finally we have, after proper conditionning, considering the variability in
  $S_i$ while fixing $X_i$ first,
  \begin{align*}
    \E_{{\cal D}_n, X}\bracket{\norm{g_n^*(X) - g_n(X)}_{\cal H}\midvert (X_i) \in \mathbb{D}}
    &= 2c_\phi k^{-1}\E_{(X_i)}\bracket{\sum_{i=1}^n \Pbb_X\paren{X_i \in{\cal
          N}(X)} \E_{(S_i)}\bracket{\ind{\phi(\hat{y}_i) \neq
          \phi(y^*_i)} \midvert (X_i)} \midvert (X_i) \in \mathbb{D}}
    \\&= 2c_\phi k^{-1}\E_{(X_i), X}\bracket{\sum_{i=1}^n \ind{X_i \in{\cal
    N}(X)} \Pbb_{(S_i)}\paren{\phi(\hat{y}_i) \neq
    \phi(y^*_i) \midvert (X_i)} \midvert (X_i) \in \mathbb{D}}.
  \end{align*}
  We design $\mathbb{D}$, because when this event holds, we know that the $k$-th
  nearest neighbor of any input $X_i$ is at distance at most $h$ of $X_i$,
  meaning the because of class separation, $y_{x_i} \in S_j$ for any $X_j \in
  {\cal N}(X_i)$. This mean that outputting $(\hat{y}_i) = (y^*_i)$ and 
  $z_j = y_j$, will lead to an optimal error in Eq.~\eqref{eq:disambiguation}. Now
  suppose that there is an other solution for Eq.~\eqref{eq:disambiguation} such
  that $\hat{y}_i \neq y_i^*$, it should also achieve an optimal error,
  therefore it should verify $z_j = \hat{y}_j$ for all $j$ as well as $\hat{y}_j
  = \hat{y}_i$ for all $j$ such that $X_j$ is one of the $k$ nearest neighbors
  of $X_i$. This implies that $\hat{y}_i \in \cap_{j;X_j \in {\cal N}(X_i)}
  S_j$, which happen with probability
  \[
    \Pbb_{(S_j)_{j;X_j\in{\cal N}(X_i)}}(\exists z \neq y_i, z \in \cap_j S_j)
    \leq m \Pbb_{S_j}(z \in S_j)^k
    \leq m \eta^k = m\exp(-k\abs{\log(\eta)}).
  \]
  With $m = \card{\Y}$ the number of element in $\Y$. We deduce that
  \[
    \Pbb_{(S_i)}\paren{\phi(\hat{y}_i) \neq \phi(y^*_i) \midvert (X_i)}
    \leq m \exp(-k\abs{\log(\eta)}).
  \]
  And because $\sum_{i=1}^n \ind{X_i \in {\cal N}(X)} = k$, we conclude that
  \[
    \E_{{\cal D}_n, X}\bracket{\norm{g_n^*(X) - g_n(X)}_{\cal H}\midvert (X_i)
      \in \mathbb{D}} \leq 2c_\phi m \exp(-k\abs{\log(\eta)}).
  \]
  Finally, adding everything together we get
  \[
    {\cal E}(f_n) \leq 8 c_\phi c_\psi \exp\paren{-\frac{np}{8}}
     + 8 c_\phi c_\psi n \exp\paren{-\frac{(n-1)p}{8}}
     + 8 c_\phi c_\psi m \exp\paren{-k\abs{\log(\eta)}}.
   \]
   as long as $k < (n-1) p /2$, which implies Theorem \ref{thm:convergence} as
   long as $n \geq 2$.
   
  \begin{remark}[Other approaches]
     While we have proceed with analysis based on local averaging methods, other
     paths could be explored to prove convergence results of the algorithm provided
     Eq.~\eqref{eq:disambiguation} and \eqref{eq:estimate}.
     For example, one could prove Wasserstein convergence of
     $\sum_{i=1}^n\delta_{(x_i, \hat{y}_i)}$ towards $\sum_{i=1}^n \delta_{(x_i, \hat{y}^*_i)}$,
     together with some continuity of the learning algorithm as a function of those
     distributions.\footnote{The Wasserstein metric is useful to think in term
       of distributions, which is natural when considering partial supervision
       that can be cast as a set of admissible fully supervised distributions.
       This approach has been successfully followed by \citet{Perchet2015} to
       deal with partial monitoring in games.}
     This analysis could be understood as tripartite:
     \begin{itemize}
     \item A disambiguation error, comparing $\hat{y}_i$ to $y_i^*$.
     \item A stability / robustness measure of the algorithm to learn $f_n$ from data
       when substituting $y_i^*$ by $\hat{y}_i$.
     \item A consistency result regarding $f_n^*$ learnt on $(x_i, y_i^*)$. 
     \end{itemize}
     Our analysis followed a similar path, yet with the first two parts tackled jointly.
   \end{remark}

\subsection{Proof of Proposition \ref{prop:init}}
\label{proof:init}

  Under the non-ambiguity hypothesis (Assumption \ref{ass:non-ambiguity}), the
  solution of Eq.~\eqref{eq:solution} is characterized pointwise by $f^*(x) = y_x$
  for all $x\in\supp\nu_\X$.
  Similarly under Assumption \ref{ass:non-ambiguity}, we have the characterization
  $f^*(x) \in \cap_{S\in\supp\nu\vert_x} S$.
  With the notation of Definition \ref{def:init}, since $f^*(x)$ minimizes
  $z\to\E_{Y\sim\mu_S}[\ell(z, Y)]$ for all $S\in\supp\nu\vert_x$, it also
  minimizes $z\to\E_{S\sim\nu\vert_x}\E_{Y\sim\mu_S} [\ell(z, Y)]$.

  For the second part of the proposition, we use the structured prediction
  framework of \citet{Ciliberto2020}.
  Define the signed measure $\mu^\circ$ defined as $\mu^\circ_\X := \nu_\X$ and
  $\mu^\circ\vert_x := \E_{S\sim\nu\vert_x}\E_{Y\sim\mu_S}[\delta_Y]$, and
  $f^\circ:\X\to\Y$ the solution $f^\circ \in
  \argmin_{f:\X\to\Y}\E_{(X, Y)\sim\mu^\circ}[\ell(f(X), Y)] = \argmin_{f:\X\to\Y}
  \E_{(X, Y)\sim\nu}\bracket{\E_{Y\sim\mu_S}[\ell(f(X), Y)]}$.
  The first part of the proposition tells us that $f^\circ = f^*$ under Assumption
  \ref{ass:non-ambiguity}.
  The framework of \citet{Ciliberto2020}, tells us that $f^\circ$ is obtained after
  decoding, Eq.~\eqref{eq:decoding}, of $g^\circ:\X\to{\cal H}$, and that if
  $g_n^\circ$ converges to $g^\circ$ with the $L^1$ norm, $f_n^\circ$ converges
  to $f^\circ$ in term of the $\mu^\circ$-risk. Under Assumption
  \ref{ass:non-ambiguity} and mild hypothesis on $\mu^\circ$, 
  it is possible to prove that convergence in term of the
  $\mu^\circ$-risk implies convergence in term of the $\mu$-risk (for example
  through calibration inequality similar to Proposition 2 of \citet{Cabannes2020}).

\subsection{Ranking with Partial ordering is a well behaved problem}
\label{proof:ranking}

  Here, we discuss about building directly $\xi_S$ to initialize our alternative
  minimization scheme or considering $\mu_S$ given by the definition of
  well-behaved problem (Definition \ref{def:init}). 
  Since the existence of $\mu_S$ implying $\xi_S$ defined as
  $\E_{Y\sim\mu_S}[\phi(Y)]$, we will only study when $\xi_S$ can be cast as a $\mu_S$.

  In ranking, we have that $\psi = -\phi$, which corresponds to ``correlation losses''.
  In this setting, we have that $\Span(\phi(\Y)) = \Span(\psi(\Y))$.
  More generally, looking at a ``minimal'' representation of $\ell$, one can
  always assume the equality of those spans, as what happens on the orthogonal of
  the intersection of those spans, does not modify the scalar product $\phi(y)^\top\psi(z)$.
  Similarly, $\xi_S$ can be restricted to $\Span(\psi(\Y))$, and
  therefore $\Span(\phi(\Y))$, which exactly the image by
  $\mu\to\E_{Y\sim\mu}[\phi(Y)]$ of the set of signed measures, showing the
  existence of a $\mu_S$ matching Definition \ref{def:init}.


\section{IQP implementation for Eq.~\eqref{eq:disambiguation}}
\label{app:diffrac}
In this section, we introduce an IQP implementation to solve for
Eq.~\eqref{eq:disambiguation}. We first mention that our alternative
minimization scheme is not restricted to well-behaved problem, before motivating
the introduction of the IQP algorithm in two different ways, and finally
describing its implementation.

\subsection{Initialization of alternative minimization for non well-behaved problem}
  Before describing the IQP implementation to solve Eq.~\eqref{eq:algo}, we would
  like to stress that, even for non well-behaved partial labelling problems, it
  is possible to search for smart ways to initialize variables of the alternative
  minimization scheme. For example, one could look at
  $z_i^{(0)} \in \cap_{j;x_j\in{\cal N}_{k_i}} S_j$, where ${\cal N}_k$ designs the
  $k$ nearest neighbors of $x_i$ in $(x_j)_{j\leq n}$, and $k_i$ is chosen such
  that this intersection is a singleton. 

\subsection{Link with Diffrac and empirical risk minimization}
  Our IQP algorithm is similar to an existing disambiguation algorithm known as
  the Diffrac algorithm \citep{Bach2007,Joulin2010}.\footnote{The Diffrac
    algorithm was first introduced for clustering, which is a classical approach
    to unsupervised learning. In practice, it consists to change the constraint set
    $C_n = \prod S_i$ by a set of the type $C_n = \argmax_{(y_i) \in \Y^n} \sum_{i,j=1}^n
    \ind{y_i\neq y_j}$ in Eqs.~\eqref{eq:disambiguation} and \eqref{eq:iqp},
    meaning that $(y_i)$ should be disambiguated into different classes.}
  This algorithm was derived by implicitly following empirical risk minimization of
  Eq.~\eqref{eq:principle}.
  This approach leads to algorithms written as
  \[
  (y_i) \in \argmin_{(y_i) \in C_n} \inf_{f\in{\cal F}} \sum_{i=1}^n \ell(f(x_i), y_i)
  + \lambda \Omega(f),
  \]
  for ${\cal F}$ a space of functions, and $\Omega:{\cal F} \to \R_+$ a measure
  of complexity. Under some conditions, it is possible to simplify the dependency
  in $f$ \citep[{\em e.g.,}][]{Xu2004,Bach2007}.
  For example, if $\ell(y, z)$ can be written as $\norm{\phi(y) - \phi(z)}^2$
  for a mapping $\phi:\X\to\Y$, {\em e.g.} the Kendall loss detailed in Section
  \ref{sec:ranking},\footnote{Since $\norm{\phi(y)}$ is constant.} and the
  search of $\phi(f):\X\to\phi(\Y)$ is relaxed as a $g:\X\to{\cal H}$. With
  $\Omega$ and ${\cal F}$ linked with kernel regression 
  on the surrogate functional space $\X\to{\cal H}$, it is possible to solve the
  minimization with respect to $g$ as $g(x_i) = \sum_{j=1}^n \alpha_j(x_i)\phi(y_i)$,
  with $\alpha$ given by kernel ridge regression \citep{Ciliberto2016}, and to
  obtain a disambiguation algorithm written as 
  \[
    \argmin_{y_i \in S_i} \sum_{i=1}^n\big\|\sum_{j=1}^n \alpha_j(x_i) \phi(y_j) - \phi(y_i)\big\|^2.
  \]
  This IQP is a special case of the one we will detail. As such, our IQP is
  a generalization of the Diffrac algorithm, and this paper provides, to our knowledge,
  {\em the first consistency result for Diffrac}.

\subsection{Link with an other determinism measure}
  While we have considered the measure of determinism given by
  Eq.~\eqref{eq:principle}, we could have considered its quadratic variant
  \[
    \mu^\star \in \argmin_{\mu\vdash\nu} \inf_{f:\X\to\Y} \E_{X\sim\nu_\X}
    \bracket{\E_{Y, Y'\sim\mu\vert_x}\bracket{\ell(Y, Y')}}.
  \]
  This correspond to the right drawing of Figure \ref{fig:objective}.
  We could arguably translate it experimentally as
  \begin{equation}
    \label{eq:iqp}
    (\hat{y}_i) \in \argmin_{(y_i) \in C_n} \sum_{i,j=1}^n \alpha_i(x_j)
    \ell(y_i, y_j),
  \end{equation}
  and still derive Theorem \ref{thm:convergence} when substituting
  Eq.~\eqref{eq:disambiguation} by Eq.~\eqref{eq:iqp}.
  When the loss is a correlation loss $\ell(y, z) = -\phi(y)^\top\phi(z)$.
  This leads to the quadratic problem
  \[
    (\hat{y}_i) \in \argmin_{(y_i) \in C_n} -\sum_{i,j=1}^n \alpha_i(x_j)
    \phi(y_i)^\top \phi(y_j).
  \]

\subsection{IQP Implementation}
  In order to make our implementation possible for any symmetric loss
  $\ell:\Y\times\Y\to\R$, on a finite space $\Y$, we introduce the following
  decomposition.

  \begin{proposition}[Quadratic decomposition]
    When $\Y$ is finite, any proper symmetric loss $\ell$ admits a decomposition
    with two mappings $\phi:\Y\to\R^m$, $\psi:\Y\to\R^m$, for a $m\in\N$ and a $c\in\R$,
    reading
    \begin{equation}
      \label{eq:qloss}
      \forall\, y, z\in \Y, \quad\ell(y, z) = \psi(y)^\top\psi(z) - \phi(y)^\top
      \phi(z)
      \qquad \text{with}\qquad \norm{\phi(y)} = \norm{\psi(y)} = c
    \end{equation}
  \end{proposition}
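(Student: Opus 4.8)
The plan is to regard the loss as a real symmetric matrix and to read the decomposition off its spectral decomposition, then to enforce the norm normalization by appending a single shared coordinate. Write $N = \card{\Y}$ and let $L = (\ell(y,z))_{y,z\in\Y}\in\R^{N\times N}$; by symmetry and properness of $\ell$, $L$ is symmetric with vanishing diagonal. Diagonalizing $L$ in an orthonormal eigenbasis and grouping eigenpairs by the sign of the eigenvalue gives $L = L_+ - L_-$ with $L_+, L_- \succeq 0$. Factor $L_+ = \Psi_0\Psi_0^\top$ and $L_- = \Phi_0\Phi_0^\top$, where $\Psi_0\in\R^{N\times p}$ is obtained by scaling the positive-eigenvalue eigenvectors by the square roots of their eigenvalues (and $\Phi_0\in\R^{N\times q}$ analogously for the negative part); padding both matrices with zero columns up to the common width $m_0 = \max(p,q)$, their $y$-th rows $\psi_0(y),\phi_0(y)\in\R^{m_0}$ satisfy $\psi_0(y)^\top\psi_0(z) - \phi_0(y)^\top\phi_0(z) = L_{yz} = \ell(y,z)$ for all $y,z\in\Y$.

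The second step is to notice that the norm constraint on the $\phi$-side comes for free from the vanishing diagonal: setting $z=y$ in the identity above yields $\norm{\psi_0(y)}^2 - \norm{\phi_0(y)}^2 = \ell(y,y) = 0$, so $\norm{\psi_0(y)} = \norm{\phi_0(y)} =: r_y$ for every $y$. It then remains only to make $r_y$ constant in $y$. Fix any $c \geq \max_{y\in\Y} r_y$ with $c>0$ (such a $c$ exists: if $\card{\Y}\geq 2$ then $L\neq 0$ since $\ell$ is positive off the diagonal, hence $\max_y r_y>0$ and one may take $c = \max_y r_y$; the case $\card{\Y}=1$ is trivial), and define
\[
  \psi(y) = \paren{\psi_0(y),\ \sqrt{c^2 - r_y^2}}, \qquad \phi(y) = \paren{\phi_0(y),\ \sqrt{c^2 - r_y^2}} \ \in\ \R^{m}, \quad m = m_0 + 1.
\]
Appending the \emph{same} scalar to both maps leaves the difference of inner products unchanged, so $\psi(y)^\top\psi(z) - \phi(y)^\top\phi(z) = \psi_0(y)^\top\psi_0(z) - \phi_0(y)^\top\phi_0(z) = \ell(y,z)$, while $\norm{\psi(y)}^2 = r_y^2 + (c^2 - r_y^2) = c^2 = \norm{\phi(y)}^2$, which is exactly \eqref{eq:qloss}.

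There is no genuinely hard analytic step: finiteness of $\Y$ makes $m$ automatically finite and reduces every claim to elementary linear algebra. The only two points deserving care — and the only places where "being proper" and the loss structure are actually used — are (i) recognizing that the requirement $\norm{\phi(y)}=\norm{\psi(y)}$ is not an extra constraint but a consequence of $\ell$ vanishing on the diagonal, and (ii) observing that one shared padding coordinate simultaneously equalizes all row norms to a common value $c$ without perturbing the represented loss. These two observations are precisely what make the normalization $\norm{\phi(y)}=\norm{\psi(y)}=c$ attainable, so I expect writing them cleanly to be the main (and only) subtlety in the argument.
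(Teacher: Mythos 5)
Your proof is correct and follows essentially the same route as the paper's: diagonalize the symmetric loss matrix, split by eigenvalue sign to obtain $\ell(y,z)=\psi_0(y)^\top\psi_0(z)-\phi_0(y)^\top\phi_0(z)$, use properness ($\ell(y,y)=0$) to conclude $\norm{\psi_0(y)}=\norm{\phi_0(y)}$, and then equalize all norms to a common $c$ by appending an identical correction to both maps, which leaves the difference of inner products unchanged. The only (harmless) difference is that you append a single shared coordinate $\sqrt{c^2-r_y^2}$, whereas the paper appends the correction along distinct basis directions, $\sqrt{C-\norm{\tilde\psi(y)}^2}\,e_y$, spending $\card{\Y}$ extra dimensions where yours spends one.
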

  \small
  \begin{proof}
    Consider $\Y = {y_1, \cdot, y_m}$ and
    $L = (\ell(y_i, y_j))_{i,j\leq m} \in \R^{m\times m}$.
    $L$ is a symmetric matrix, diagonalizable as
    \(
      L = \sum_{i=1}^m \lambda_i u_i\otimes u_i, 
    \)
    with $(u_i)$ a orthonormal basis of $\R^m$, and $\lambda_i \in \R$ its eigen values.
    We have, with $(e_i)$ the Cartesian basis of $\R^m$,
    \[
      \ell(y_j, y_k) = L_{jk} = \scap{e_j}{Le_k} = \sum_{i=1}^m (\lambda_i)_+
      \scap{e_j}{u_i} \scap{e_k}{u_i}
      - \sum_{i=1}^m (\lambda_i)_- \scap{e_j}{u_i} \scap{e_k}{u_i}.
    \]
    We build the decomposition
    \[
      \tilde{\psi}(y_k) = \paren{\sqrt{(\lambda_i)_+} \scap{e_k}{u_i}}_{i\leq m},
      \qquad\text{and}\qquad
      \tilde{\phi}(y_k) = \paren{\sqrt{(\lambda_i)_-} \scap{e_k}{u_i}}_{i\leq m}.
    \]
    It satisfies
    \(
      \ell(y_j, y_k) = \tilde\psi(y)^\top\tilde\psi(z) - \tilde\phi(y)^\top\tilde\phi(z).
    \)
    We only need to show that we can consider $\phi$ of constant norm.
    For this, first consider $C = \max_i\abs{\lambda_i}$, we have
    \(
      \norm{\tilde{\psi}(y_k)}^2 = \sum_{i=1}^m (\lambda_i)_+ \scap{u_i}{e_k}^2
      \leq C \sum_{i=1}^m \scap{u_i}{e_k}^2 = C\norm{e_k}^2 = C
    \)
    The last equalities being due to the fact that $(u_i)$ is orthonormal.
    Now, introduce the correction vector $\xi:\Y\to\R^m$,
    \(
      \xi(y_i) = \sqrt{C - \norm{\tilde\psi(y)}^2} e_i.
    \)
    And consider $\phi = \binom{\tilde\phi}{\xi}$, $\psi =
    \binom{\tilde\psi}{\xi}$. By construction, $\psi$ is of constant norm
    being equal to $C$ and that $\ell(y, z) = \psi(y)^T\psi(z) -
    \phi(y)^T\phi(z)$. Finally, because $\ell(y, z) = 0$, we also have $\phi$ of
    constant norm.
  \end{proof}
  \normalsize

  Using the decomposition Eq.~\eqref{eq:qloss}, Eq.~\eqref{eq:iqp} reads, with
  $\textbf{y} = (y_i)$
  \[
    {\bf \hat y} \in \argmin_{{\bf y}\in C_n}\sum_{i=1}^n \alpha_i(x_j) \psi(y_i) \psi(y_j) - \sum_{i=1}^n \alpha_i(x_j) \phi(y_i) \phi(y_j).
  \]
  By defining the matrix
  $A = (\alpha_i(x_j))_{ij\leq n} \in \R^{n\times n}$, 
  $\Psi({\bf y}) = (\psi(y_i))_{i\leq n} \in \R^{n\times m}$ and 
  $\Phi({\bf y}) = (\phi(y_i))_{i\leq n} \in \R^{n\times m}$, we cast it as
  \[
    {\bf \hat y} \in \argmin_{{\bf y}\in C_n}\trace\paren{A\Psi({\bf y})\Psi({\bf y})^\top} - \trace\paren{A\Phi({\bf y})\Phi({\bf y})^\top}.
  \]

  \paragraph{Objective convexification.} As $\alpha_i(x_j)$ is a measure of
  similarity between $x_i$ and $x_j$, $A$ is usually symmetric positive
  definite, making this objective convex in $\Psi$ and concave in $\Phi$.
  However, recalling Eq.~\eqref{eq:qloss}, we have
  $\trace\Phi\Phi^\top = \trace{\Psi\Psi^\top} = n c$,
  therefore considering the spectral norm of $A$, we convexify the objective as
  \[
    {\bf \hat y} \in \argmin_{{\bf y}\in C_n}\trace\paren{(\norm{A}_*I +
      A)\Psi({\bf y})\Psi({\bf y})^\top} + \trace\paren{(\norm{A}_*I - A)\Phi({\bf y})\Phi({\bf y})^\top}.
  \]
  Considering
  \[
    B = \paren{\begin{array}{cc} \norm{A}_*I + A & 0 \\ 0 & \norm{A}_*I -  A \end{array}}
  \qquad\text{and}\quad
  \Xi({\bf y}) = \binom{\Psi({\bf y})}{\Phi({\bf y})},
  \]
  allow to simplify this objective as
  \[
    {\bf \hat y} \in \argmin_{{\bf y}\in C_n}\trace\paren{B\Xi({\bf y})\Xi({\bf y})^\top}.
  \]
  When parametrized by $\xi = \Xi({\bf y})$, this is an optimization problem with
  a convex quadratic objective and ``integer-like'' constraint $\xi \in
  \Xi(C_n)$, identifying to an integer quadratic program (IQP).

  \paragraph{Relaxation.} IQP are known to be NP-hard, several tools exists in
  literature and optimization library implementing them. The most classical
  approach consists in relaxing the integer constraint $\xi \in \Xi(C_n)$ into
  the convex constraint $\xi \in \hull(\Xi(C_n))$, solving the resulting convex
  quadratic program, and projecting back the solution towards an extreme of the
  convex set. Arguably, our alternative minimization approach is a better
  grounded heuristic to solve our specific disambiguation problem.

\section{Example with graphical illustrations}
\label{app:example}

\begin{figure*}[t]
  \centering
  \begin{tikzpicture}[scale=3]
  \coordinate(a) at (0, 0);
  \coordinate(b) at ({1/2}, {sin(60)});
  \coordinate(c) at (1, 0);
  \coordinate(ha) at ({3/4}, {sin(60)/2});
  \coordinate(hb) at ({1/2}, 0);
  \coordinate(hc) at ({1/4}, {sin(60)/2});

  \coordinate (mc) at (1,-.25);
  \coordinate (mca) at (0,-.25);
  \fill[fill=white] (a) -- (c) -- (mc) -- (mca) -- cycle;

  \fill[fill=red!20] (a) -- (hb) -- (ha) -- (hc) -- cycle;
  \fill[fill=green!20] (b) -- (ha) -- (hc) -- cycle;
  \fill[fill=blue!20] (c) -- (ha) -- (hb) -- cycle;
  \draw (a) node[anchor=north east]{a} -- (b) node[anchor=south]{b} --
  (c) node[anchor=north west]{c} -- cycle;
  \draw (hb) -- (ha) -- (hc);
  \node at ({3/8}, {sin(60)/4}) {$R_a$};
  \node at ({1/2}, {3*sin(60)/4 - 1/16}) {$R_b$};
  \node at ({3/4}, {sin(60)/4 - 1/16}) {$R_c$};
\end{tikzpicture}
\begin{tikzpicture}[scale=3]
  \coordinate(a) at (0, 0);
  \coordinate(b) at ({cos(60)}, {sin(60)});
  \coordinate(c) at (1, 0);
  \coordinate(r) at ({1/4}, 0);
  \coordinate(s) at ({7/32 + 1/8}, {7*sin(60)/16});
  \coordinate(t) at ({3/8 + 1/4}, {3*sin(60)/4});

  \coordinate (mc) at (1,-.25);
  \coordinate (mca) at (0,-.25);
  \fill[fill=white] (a) -- (c) -- (mc) -- (mca) -- cycle;

  \fill[fill=black!20] (c) -- (r) -- (s) -- (t) -- cycle;
  \fill[fill=black!10] (a) -- (r) -- (s) -- (t) -- (b) -- cycle;
  \draw (a) node[anchor=north east]{a} -- (b) node[anchor=south]{b} --
  (c) node[anchor=north west]{c} -- cycle;
  \draw (c) -- (r) -- (s) -- (t) -- cycle;
  \node at ({5/8}, {sin(60)/3 - 1/16}) {$R_{\nu}$};
\end{tikzpicture}
\begin{tikzpicture}[scale=3]  
  \coordinate (a) at (0,0) ;
  \coordinate (b) at ({1/2},{sin(60)}) ;
  \coordinate (c) at (1,0);
  \coordinate (mb) at (.25, {sin(60)});
  \coordinate (mba) at ({-.25*cos(30)},{.25*sin(30)});
  \coordinate (mc) at (1,-.25);
  \coordinate (mca) at (0,-.25);
  \coordinate(ha) at ({3/4}, {sin(60)/2});
  \coordinate(hb) at ({1/2}, 0);
  \coordinate(hc) at ({1/4}, {sin(60)/2});

  \foreach \x in {0,.05,...,.25}
  \draw[gray, rotate=30] ({sin(60)}, \x) -- ({sin(60) - (tan(60)*\x)}, \x) -- ({sin(60) - (tan(60)*\x)}, -\x) -- ({sin(60)}, {-\x}); 
  \foreach \x in {0.25,.3,...,.5}
  \draw[gray, rotate=30] ({sin(60)}, \x) -- ({sin(60)-tan(60)*(.5-\x)}, \x);
  \foreach \x in {0.25,.3,...,.5}
  \draw[gray, rotate=30] ({sin(60)-tan(60)*(.5-\x)}, -\x) -- ({sin(60)}, {-\x}); 
  \foreach \x in {.25,.3,...,.5}
  \draw[gray, rotate=30] ({sin(60) - (tan(60)*\x)}, .25) -- ({sin(60) - (tan(60)*\x)}, -.25); 

  \fill[fill=white] (a) -- (c) -- (mc) -- (mca) -- cycle;
  \fill[fill=white] (a) -- (b) -- (mb) -- (mba) -- cycle;
  \draw (a) node[anchor=north east]{a} -- (b) node[anchor=south]{b} -- (c) node[anchor=north west]{c} -- cycle;
  \draw[dotted, thick] (hc) -- (ha) -- (hb);
\end{tikzpicture}
\begin{tikzpicture}[scale=3]
  \coordinate (a) at (0,0) ;
  \coordinate (b) at ({1/2},{sin(60)}) ;
  \coordinate (c) at (1,0) ;
  \coordinate (mb) at (.25, {sin(60)});
  \coordinate (mba) at ({-.25*cos(30)},{.25*sin(30)});
  \coordinate (mc) at (1,-.25);
  \coordinate (mca) at (0,-.25);
  \coordinate(ha) at ({3/4}, {sin(60)/2});
  \coordinate(hb) at ({1/2}, 0);
  \coordinate(hc) at ({1/4}, {sin(60)/2});

  \foreach \x in {0,.05,...,.5}
  \draw[gray,rotate=30] ({sin(60)}, 0) + (0, {\x}) arc (90:270:{sqrt(3)*\x} and {\x});

  \fill[fill=white] (a) -- (c) -- (mc) -- (mca) -- cycle;
  \fill[fill=white] (a) -- (b) -- (mb) -- (mba) -- cycle;
  \draw (a) node[anchor=north east]{a} -- (b) node[anchor=south]{b} -- (c) node[anchor=north west]{c} -- cycle;
  \draw[dotted, thick] (hc) -- (ha) -- (hb);
\end{tikzpicture}
  \caption{Exposition of a pointwise problem in the simplex $\prob\Y$, with
    $\Y = \brace{a, b, c}$ and a proper symmetric loss defined by $\ell(a, b) =
    \ell(a, c) = \ell(b, c) / 2$.
    (Left) Representation of the decision regions
    $R_z = \brace{\mu\in\prob{\Y}\midvert z\in \argmin_{z'\in\Y}
      \E_{Y\sim\mu}[\ell(z, y)]}.$ for $z\in\Y$.
    (Middle Left) Representation of
    $R_\nu = \brace{\mu\in\prob{\Y} \midvert \mu\vdash\nu}$ for
    $\nu = (5\delta_{\brace{a, b, c}} + \delta_{\brace{c}} + 
    \delta_{\brace{a, c}} + \delta_{\brace{b, c}}) / 8$
    (Middle Right) Level curves of the piecewise function
    $\prob{\Y}\to\R; \mu\to\min_{z\in\Y}\E_{Y\sim\mu}[\ell(z, Y)]$
    corresponding to Eq.~\eqref{eq:principle}.
    (Right) Level curves of the quadratic function
    $\prob{\Y}\to\R; \mu\to \E_{Y, Y'\sim\mu}[\ell(Y, Y')]$.
    Our disambiguation \eqref{eq:principle} corresponds to minimizing the
    concave function represented on the middle right drawing on the convex domain
    represented on the middle left drawing.}  
  \label{fig:objective}
\end{figure*}

To ease the understanding of the disambiguation principle \eqref{eq:principle},
we provide a toy example with a graphical illustration, Figure~\ref{fig:objective}.
Since Eq.~\eqref{eq:principle} decorrelates inputs, we will consider $\X$ to be
a singleton, in order to remove the dependency to $\X$.
In the following, we consider $\Y = \brace{a, b, c}$, with the loss given by
\[
  L = (\ell(y, z))_{y, z\in \Y} =
  \paren{\begin{array}{ccc}
    0 & 1 & 1 \\
    1 & 0 & 2 \\
    1 & 2 & 0
  \end{array}}.
\]

This problem can be represented on a triangle through the embedding of probability
measures reading 
$\xi:\prob{\Y}\to \R^3; \mu\to\mu(a)e_1 + \mu(b)e_2 + \mu(c)e_3$,
and onto the triangle $\brace{z\in\R_+^3 \midvert z^\top 1 = 1}$.
Note that $\xi$ can be extended from any signed measure of
total mass normalized to one onto the plane $\brace{z\in\R^3 \midvert z^\top 1 = 1}$, as
well as the drawings Figure \ref{fig:objective} can be extended onto the affine
span of the represented triangles. 
The objective \eqref{eq:principle} reads pointwise as
$\prob{\Y}\to\R; \mu \to \min_{i\leq 3} e_i^\top L \xi(\mu)$, while its
quadratic version reads $\prob{\Y}\to\Y; \mu\to \xi(\mu)^\top L \xi(\mu)$.
Note that while $L$ is not definite negative, one can check that the restriction
of $\R^3 \to \R; z \to z^\top Lz$ to the definition domain
$\brace{z\in\R^3\midvert z^\top 1 = 1}$ is concave, as suggested by the right
drawing of Figure \ref{fig:objective}.

It should be noted that $(\ell, \nu)$ being a well-behaved partial labelling problem
can be understood graphically, as having the intersection of the decision regions
$\cap_{z\in S} R_z$ non-empty for any set $S$ in the support of $\nu$.
As such, it is easy to see that our toy problem is well-behaved for any
distribution $\nu$.
Formally, to match Definition \ref{def:init}, we can define $\mu_{\brace{e}} = \delta_e$ for $e\in\brace{a, b, c}$ and 
\[
  \mu_{\brace{a,b}} = .5\delta_a + .5\delta_b,\quad
  \mu_{\brace{a,c}} = .5\delta_b + .5\delta_c,\quad
  \mu_{\brace{b, c}} = \delta_b + \delta_c - \delta_a,\quad
  \mu_{\brace{a, b, c}} = .5\delta_b + .5\delta_c.
\]
Graphically $\xi(\mu_{\brace{a,b}})$ can be chosen as any points on the
horizontal dashed line on the middle right drawing of Figure \ref{fig:objective}
(similarly for $\xi\mu_{\brace{a, c}}$), while $\xi(\mu_{\brace{a, b, c}})$ has to
be chosen has the intersection $.5e_2 + .5e_3$, and while $\xi(\mu_{\brace{b,c}})$
has to be chosen outside the simplex on the half-line leaving $.5e_2 + .5e_3$
supported by the perpendicular bisector of $[e_2, e_3]$ and not containing $e_1$.


\section{Experiments}
\label{app:experiments}
While our results are much more theoretical than experimental, out of principle,
as well as for reproducibility, comparison and usage sake, we detail our experiments. 

\subsection{Interval regression - Figure \ref{fig:ir}}
Figure \ref{fig:ir} corresponds to the regression setup consisting of learning
$f^*:[0, 1]\to\R; x\to\sin(\omega x)$, with $\omega=10\approx 3\pi$.
The dataset represented on Figure \ref{fig:ir} is collected in
the following way. We sample $(x_i)_{i\leq n}$ with $n = 10$, uniformly at
random on $\X=[0, 1]$, after fixing a random seed for reproducibility.
We collect $y_i = f(x_i)$. We create $(s_i)$ by sampling $u_i$ uniformly on
$[0,1]$, defining $r_i = r - \gamma \log(u_i)$, with $r=1$ and $\gamma =
3^{-1}$, sampling $c_i$ uniformly at random on $[0, r_i]$, and defining
$s_i = y_i + \sign(y_i)\cdot c_i + [-r_i, r_i]$.
The corruption is skewed on purpose to showcase disambiguation instability of
the baseline \eqref{eq:baseline} compared to our method.
We solve Eq.~\eqref{eq:disambiguation} with alternative minimization,
initialized by taking $y_i^{(0)}$ at the center of $s_i$, and stopping the
minimization scheme when $\sum_{i\leq n}\vert y_i^{(t+1)} - y_i^{(t)}\vert < \epsilon$
for $\epsilon$ a stopping criterion fixed to $10^{-6}$.
For $x\in \X$, the inference Eqs.~\eqref{eq:estimate} and \eqref{eq:baseline} is
done through grid search, considering, for $f_n(x)$, 1000 guesses dividing
uniformly $[-6, 6] \subset \Y = \R$.
We consider weights $\alpha$ given by kernel ridge regression with Gaussian
kernel, defined as
\[
  \alpha(x) = (K+n\lambda I)^{-1}K_x \in \R^n, \quad
  K = (k(x_i, x_j))_{i,j\leq n} \in \R^{n\times n},\quad
  K_x = (k(x_i, x))_{i\leq n} \in \R^n,\quad
  k(x, x') = \exp\paren{-\frac{\norm{x-x'}^2}{2\sigma^2}},
\]
with $\lambda$ a regularization parameter, and $\sigma$ a standard deviation parameter.
In our simulation, we fix $\sigma = .1$ based on simple considerations on the
data, while we consider $\lambda \in [10^{-1}, 10^{-3}, 10^{-6}]$.
The evaluation of the mean square error between $f_n$ and $f^*$, which is
equivalent to evaluating the risk with the regression loss $\ell(y, z) = \norm{y - z}^2$,
is done by considering 200 points dividing uniformly $\X = [0,1]$ and evaluating
$f_n$ and $f^*$ on it.
The best hyperparameter $\lambda$ is chosen by minimizing this error.
It leads to $\lambda = 10^{-1}$ for the baseline \eqref{eq:baseline}, and
$\lambda = 10^{-6}$ for our algorithm \eqref{eq:disambiguation} and
\eqref{eq:estimate}.
This difference in $\lambda$ is normal since both methods are not estimating
the same surrogate quantities.
The fact that $\lambda$ is smaller for our algorithm is natural as our
disambiguation objective \eqref{eq:disambiguation} already has a regularization
effect on the solution.\footnote{Moreover, the analysis in \citet{Cabannes2020}
  suggests that the baseline is estimating a surrogate function in $\X\to 2^\R$,
  while our method is estimating a function in $\X\to\R$, which is a much
  smaller function space, hence needing less regularization. However, those
  reflections are based on upper bounds, that might be sub-optimal, which
  could invalidate those considerations.}
Note that we used the same weights $\alpha$ for Eq.~\eqref{eq:disambiguation}
and Eq.~\eqref{eq:estimate}, which is suboptimal, but fair to the baseline, as,
consequently, both methods have the same number of hyperparameters.

\begin{figure*}[t]
  \centering
  \vskip -0.1in
  \includegraphics{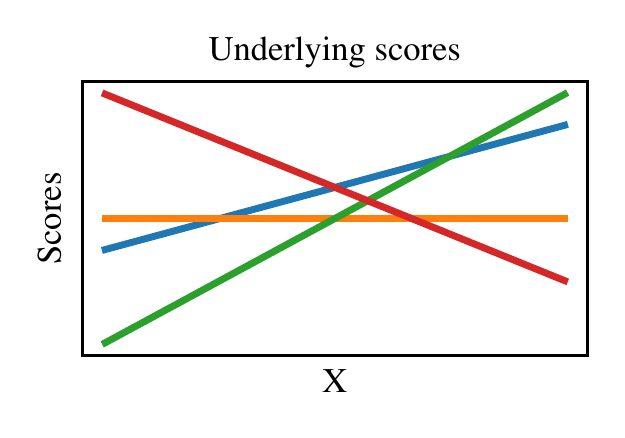}
  \includegraphics{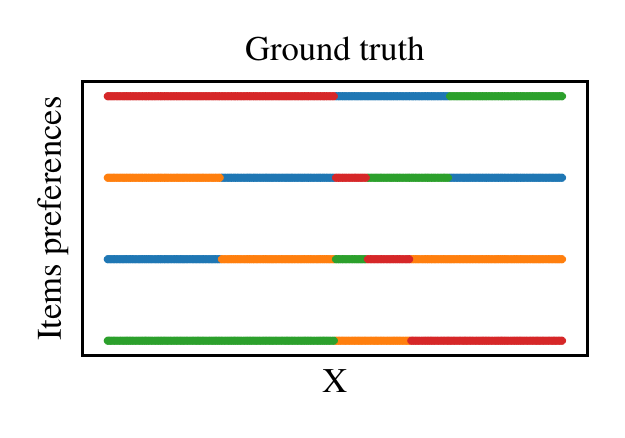}
  \vskip -0.2in
  \caption{
    Ranking setting.
    We consider $\X$ an interval of $\R$, and $\Y = \Sfrak_m$
    with $m=4$ on the figure. (Right) To create a ranking dataset, we sample randomly
    $m$ lines in $\R^2$, embedding a value, or equivalently a score, associated
    to each items as a function of the input $x$.
    (Left) By ordering those lines, we create preferences
    between items as a function of $x$. On the figure, when $x$ is small, the
    ``red'' item is prefered over the ``orange'' item, itself prefered over the ``blue''
    item, itself prefered over the ``green'' item. While when $x$ is big, ``green'' is
    prefered over ``blue'', prefered over ``orange'', prefered over ``red''.
    We create a partial labelling dataset by sampling $(x_i) \in \X^n$, and
    providing only partial ordering that the $(y_i)$ follow. For example, for a
    small $x$, we might only give the partial information that ``red'' is
    prefered over ``blue''.
  }
  \label{fig:rk_set}
  \vskip -0.1in
\end{figure*}

\subsection{Classification - Figure \ref{fig:cl}}
Figure \ref{fig:cl} corresponds to classification problems, based on real
dataset from the LIBSVM datasets repository. At the time of writing,
the datasets are available at
\url{https://www.csie.ntu.edu.tw/~cjlin/libsvmtools/datasets/multiclass.html}.
We present results on the ``Dna'' and ``Svmguide2'' datasets, that both have 3
classes ($m=3$), and respectively have 4000 samples with 180 features ($n=4000$,$d=180$)
and 391 samples with 20 features ($n = 391$, $d=20$). 

In term of {\em complexity}, when $\Y = \bbracket{1, m} = \brace{1, 2, \cdots, m}$,
and weights based on kernel ridge regression with Gaussian kernel as described in
the last paragraph the complexity of performing inference for Eqs.~\eqref{eq:estimate}
and \eqref{eq:baseline} can be done in $O(nm)$ in time and $O(n+m)$ in space,
where $n$ is the number of training samples \citep{Nowak2019,Cabannes2020}.
The disambiguation \eqref{eq:disambiguation} performed with alternative
minimization is done in $O(cn^2m)$ in time and in $O(n(n+m))$ in space,
with $c$ the number of steps in the alternative minimization scheme. In practice,
$c$ is really small, which can be understood since we are minimizing a concave
function and each step leads to a guess on the border of the constraint domain.

Based on the dataset $(x_i, y_i)$, we create $(s_i)$ by sampling it accordingly to
$\gamma \delta_{\brace{y_i}} + 1-\gamma \delta_{\brace{y, y_i}}$, with $y$ the
most present labels in the dataset (indeed we choose the two datasets because
they were not too big and presenting unequal labels proportion), and $\gamma\in[0,1]$
the corruption parameter represented in percentage on the $x$-axis of Figure
\ref{fig:cl}. This skewed corruption allows to distinguish methods and invalidate
the simple approach consisting to averaging candidate (AC) in set to recover $y_i$
from $s_i$, which works well when data are {\em missing at random} \citep{Heitjan1991}.
We  separate $(x_i, s_i)$ in 8 folds, consider $\sigma \in
d\cdot[1, .1, .01]$, where $d$ is the dimension of $\X$, and
$\lambda \in n^{-1/2}\cdot [1, 10^{-3}, 10^{-6}]$, where $n$ is the number of
data. We test the different hyperparameter setup and reported the best error for
each corruption parameter on Figure \ref{fig:cl}.
Those errors are measured with the 0-1 loss,
computed as averaged over the 8 folds, {\em i.e.} cross-validated, which standard
deviation represented as errorbars on the figure.
The best hyperparameter generally corresponds to $\sigma = .1$ and $\lambda =
10^{-3}$ when the corruption is small and $\sigma = 1$, $\lambda = 10^{-3}$ when
the corruption is big. 
Differences between cross-validated error and testing error were small, and we
presented the first one out of simplicity.

In term of {\em energy cost}, the experiments were run on a personal laptop that
has two processors, each of them running 2.3 billion instructions per second. 
During experiments, all the data were stored on the random access memory of 8GB.
Experiments were run on Python, extensively relying on the NumPy library
\citep{numpy}.
The heaviest computation is Figure \ref{fig:cl}. 
Its total runtime, cross-validation included, was around 70 seconds.
This paper is the results of experimentations, 
we evaluate the total cost of our experimentations to be three orders of
magnitude higher than the cost of reproducing the final computations presented
on Figure \ref{fig:ir}, \ref{fig:cl} and \ref{fig:ss}. 
The total computational energy cost is very negligible.

\subsection{Semi-supervised learning - Figure \ref{fig:ss}}

On Figure \ref{fig:ss}, we review a semi-supervised
classification problem with $\Y = \bbracket{1, 4}$, $\X = [-4.5, 4.5]^2$,
$\mu_\X$ only charging $\brace{x=(x_1, x_2)\in\R^2\midvert x_1^2 + x_2^2 \in \N^*}$
and the solution $f^*:\X \to \Y$ being defined almost everywhere as
$f^*(x) = x_1^2 + x_2^2$.
We collect a dataset $(x_i, s_i)$, by sampling 2000 points $\theta_i$ uniformly at
random on $[0, 1]$, as well as $r_i$ uniformly at random in
$\bbracket{1, 4} = \brace{1, 2, 3, 4}$,
before building $x_i = r_i\cdot (\cos(2\pi \theta_i), \sin(2\pi\theta_i)) \in \X$,
and $s_i = \Y$.
We add four labelled points to this dataset
$x_{2001} = (-2\sqrt{3}, 2)$ with $s_{2001} = \brace{4}$,
$x_{2001} = (1, -2\sqrt{2})$ with $s_{2002} = \brace{3}$,
$x_{2001} = (-\sqrt{3}, -1)$ with $s_{2003} = \brace{2}$ and
$x_{2001} = (-1, 0)$ with $s_{2004} = \brace{1}$.
We designed the weights $\alpha$ in Eq.~\eqref{eq:disambiguation} with $k$-nearest
neighbors, with $k=20$, and solve this equation with a variant of alternative
minimization, leading to the optimal solution $\tilde{y}_i = y_i^*$.
In order to be able to compute the baseline \eqref{eq:baseline}, we design
weights $\alpha$ for the inference task based on Nadaraya-Watson estimators with
Gaussian kernel, defined as $\alpha_i(x) = \exp\paren{\norm{x-x_i}^{2} / h}$,
with $h = .08$. We solve the inference task on a grid of $\X$ composed of 2500
points, and artificially recreate the observation to make them neat and reduce the
resulting pdf size.
Note that it is possible to design weights $\alpha$ that capture the cluster
structure of the data, which, in this case, will lead to a nice behavior of the
baseline as well as our algorithm.
Arguably, this experiment showcase a regularization property of our algorithm
\eqref{eq:disambiguation}.

\subsection{Ranking with partial ordering}

To conclude this experiment section, we look at ranking with partial ordering.
We refer to Section \ref{sec:ranking} for a clear description of this instance
of partial labelling.
We provide to the reader eager to use our method, an implementation of our
algorithm, available online at \url{https://github.com/VivienCabannes/partial_labelling}.
It is based on LP relaxation of the NP-hard minimum feedback arcset problem.
This relaxation was proven exact when $m \leq 6$ by \citet{Cabannes2020}.
The LP implementation relies on CPLEX \citep{Cplex}.
As complementary experiments, we will not provide much reproducibility details, those
details would be really similar to the previous paragraphs, and the curious
reader could run our code instead.
We present our ranking setup on Figure \ref{fig:rk_set} and our results on Figure \ref{fig:rk}.

\begin{figure*}[t]
  \centering
  \vskip -0.1in
  \includegraphics{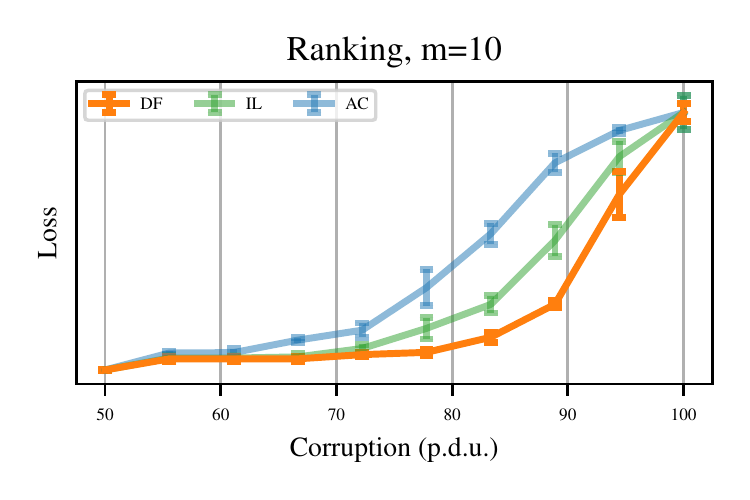}
  \vskip -0.2in
  \caption{
    Performance of our algorithm for ranking with partial ordering.
    This figure is similar to Figure
    \ref{fig:cl}, but is based on the ranking problem illustrated on Figure
    \ref{fig:rk_set}. For this figure, we consider $m = 10$, as it is
    arguably the limit where the LP relaxation provided by \citet{Cabannes2020}
    of the NP-hard minimum feedback arcset problem still performs well.
    The corruption parameter corresponds to the proportion of coordinates lost
    in the Kendall embedding when creating $s_i$ from $y_i$.
    Because the Kendall embedding satisfies transitivity constraints, a corruption
    smaller than 50\% is almost ineffective to remove any information.
    In this figure, we observe a similar behavior for ranking to the one
    observed for classification on Figure \ref{fig:cl},
    suggesting that those empirical findings are not spurious.
  }
  \label{fig:rk}
  \vskip -0.1in
\end{figure*}


\end{document}